\newcommand{\BlackBox}{\rule{1.5ex}{1.5ex}}  % end of proof
\newenvironment{proof}{\par\noindent{\bf Proof\ }}{\hfill\BlackBox\\[2mm]}
\newtheorem{theorem}{Theorem}
\newtheorem{lemma}[theorem]{Lemma}
\newtheorem{remark}[theorem]{Remark}
\newtheorem{corollary}[theorem]{Corollary}
\newcommand{\eq}[1]{(\ref{#1})}
\newcommand{\mymatrix}[2]{\left[\begin{array}{#1} #2 \end{array}\right]}
\newcommand{\inner}[2]{\left\langle #1,#2 \right\rangle}
\newcommand{\rbr}[1]{\left(#1\right)}
\newcommand{\sbr}[1]{\left[#1\right]}
\newcommand{\cbr}[1]{\left\{#1\right\}}
\newcommand{\nbr}[1]{\left\|#1\right\|}
\newcommand{\abr}[1]{\left|#1\right|}
\newcommand{\RR}{\mathbb{R}}
\newcommand{\NN}{\mathbb{N}}
\newcommand{\one}{\mathbf{1}}
\newcommand{\Fcal}{\mathcal{F}}
\newcommand{\Xcal}{\mathcal{X}}
\newcommand{\Gcal}{\mathcal{G}}
\newcommand{\Ncal}{\mathcal{N}}
\newcommand{\Hcal}{\mathcal{H}}
\newcommand{\Eb}{\mathbf{E}}
\newcommand{\Pb}{\mathbf{P}}
\DeclareMathOperator*{\cov}{\mathrm{Cov}}
\DeclareMathOperator*{\var}{\mathrm{Var}}
\newcommand{\intset}[1]{\cbr{1..n}}
\begin{document}

\title{Fastfood: Approximate Kernel Expansions in Loglinear Time}

\author{%
  Quoc Viet Le \\
  Google~Research, 1600 Amphitheatre Pky, Mountain View 94043 CA, USA
  \and
  Tamas Sarlos \\
  Google~Strategic Technologies, 1600 Amphitheatre Pky, Mountain View 94043 CA, USA
  \and
  Alexander J. Smola \\
  Carnegie Mellon University, 5000 Forbes Ave, Pittsburgh 15213
  PA, USA \\
  Google~Strategic Technologies, 1600 Amphitheatre Pky, Mountain View 94043 CA, USA
}

\maketitle

\begin{abstract}%
  \noindent 
  Despite their successes, what makes kernel methods difficult to use
  in many large scale problems is the fact that storing and computing
  the decision function is typically expensive, especially at
  prediction time. In this paper, we overcome this difficulty by
  proposing Fastfood, an approximation that accelerates such
  computation significantly. Key to Fastfood is the observation that
  Hadamard matrices, when combined with diagonal Gaussian matrices,
  exhibit properties similar to dense Gaussian random matrices. Yet
  unlike the latter, Hadamard and diagonal matrices are inexpensive to
  multiply and store. These two matrices can be used in lieu of
  Gaussian matrices in Random Kitchen Sinks proposed by
  \citet{RahRec09} and thereby speeding up the computation for a large
  range of kernel functions. Specifically, Fastfood requires $O(n \log
  d)$ time and $O(n)$ storage to compute $n$ non-linear basis
  functions in $d$ dimensions, a significant improvement from $O(n d)$
  computation and storage, without sacrificing accuracy.

  Our method applies to any translation invariant and any dot-product
  kernel, such as the popular RBF kernels and polynomial kernels. We
  prove that the approximation is unbiased and has low
  variance. Experiments show that we achieve similar accuracy to full
  kernel expansions and Random Kitchen Sinks while being 100x faster
  and using 1000x less memory. These improvements, especially in terms
  of memory usage, make kernel methods more practical for applications
  that have large training sets and/or require real-time prediction.
\end{abstract} 

%\keywords{Reproducing Kernel Hilbert Space, Random Kitchen Sinks,
%  Randomized Algorithms, Hadamard Transform}
\section{Introduction}

Kernel methods have proven to be a highly successful technique for
solving many problems in machine learning, ranging from classification
and regression to sequence annotation and feature extraction
\citep{BosGuyVap92,CorVap95,VapGolSmo97,TasGueKol04,SchSmoMul98}. At
their heart lies the idea that inner products in high-dimensional
feature spaces can be computed in implicit form via 
kernel function $k$:
\begin{align}
  k(x,x') = \inner{\phi(x)}{\phi(x')}.
\end{align}
Here $\phi: \Xcal \to \Fcal$ is a feature map transporting elements of
the observation space $\Xcal$ into a possibly infinite-dimensional
feature space $\Fcal$. This idea was first used by \cite{AizBraRoz64}
to show nonlinear separation. There exists a rich body of literature
on Reproducing Kernel Hilbert Spaces (RKHS)
\citep{Aronszajn44,Wahba90,Micchelli86b} and one may show that
estimators using norms in feature space as penalty are equivalent to
estimators using smoothness in an RKHS
\citep{Girosi98,SmoSchMul98}. Furthermore, one may provide a Bayesian
interpretation via Gaussian Processes. See
e.g. \citep{Williams98,Neal94,MacKay03} for details.

More concretely, to evaluate the decision function $f(x)$ on an example $x$,
one typically employs the kernel trick as follows
\begin{align}
  \nonumber 
  f(x) = \inner{w}{\phi(x)} 
  = \inner{\sum_{i=1}^N \alpha_i \phi(x_i)}{\phi(x)}
  = \sum_{i=1}^N \alpha_i k(x_i, x)
  \label{eq:fexpand}
\end{align}
This has been viewed as a strength of kernel methods, especially in
the days that datasets consisted of ten thousands of examples. This is
because the Representer Theorem of \cite{KimWah70} states that such a
function expansion in terms of finitely many coefficients must exist
under fairly benign conditions even whenever the space is infinite
dimensional. Hence we can effectively perform optimization in infinite
dimensional spaces.  This trick that was also exploited by
\cite{SchSmoMul98} for evaluating PCA. Frequently the coefficient
space is referred to as \emph{dual space}. This arises from the fact
that the coefficients are obtained by solving a dual optimization
problem.

Unfortunately, on large amounts of data, this expansion becomes a
significant liability for computational efficiency. For instance,
\cite{SteChr08} show that the number of nonzero $\alpha_i$ (i.e., $N$,
also known as the number of ``support vectors'') in many estimation
problems can grow linearly in the size of the training set. As a
consequence, as the dataset grows, the expense of evaluating $f$ also
grows. This property makes kernel methods expensive in many large
scale problems: there the sample size $m$ may well exceed billions of
instances. The large scale solvers of \cite{FanChaHsiWanetal08} and
\cite{MatVisSmo12} work in primal space to sidestep these problems,
albeit at the cost of limiting themselves to linear kernels, a
significantly less powerful function class. 

\section{Related Work}

Numerous methods have been proposed to mitigate this issue. To compare
computational cost of these methods we make the following assumptions:
\begin{itemize}
\item We have $m$ observations and access to an $O(m^\beta)$ with
  $\beta \geq 1$ algorithm for solving the optimization problem at
  hand. In other words, the algorithm is linear or worse. This is a
  reasonable assumption --- almost all data analysis algorithm need to
  inspect the data at least once to draw inference.
\item Data has $d$ dimensions. For simplicity we assume that it is
  dense with density rate $\rho$, i.e.\ on average $O(\rho d)$
  coordinates are nonzero.
\item The number of nontrivial basis functions is $O(\gamma m)$. This
  is well motivated by \cite{SteChr08} and it also follows from the
  fact that e.g.\ in regularized risk minimization the subgradient of
  the loss function determines the value of the associated dual
  variable.
\item We denote the number of (nonlinear) basis functions by $n$.
\end{itemize}

\paragraph{Reduced Set Expansions}

\cite{Burges96} focused on compressing function expansions after the
problem was solved by means of reduced-set expansions. That is, one
first solves the full optimization problem at $O(m^{\beta +1} \rho d)$
cost and subsequently one minimizes the discrepancy between the full
expansion and an expansion on a subset of basis functions. The
exponent of $m^{\beta + 1}$ arises from the fact that we need to
compute $O(m)$ kernels $O(m^\beta)$ times. Evaluation of the reduced
function set costs at least $O(n \rho d)$ operations per instance and
$O(n \rho d)$ storage, since each kernel function $k(x_i, \cdot)$
requires storage of $x_i$.

\paragraph{Low Rank Expansions}

Subsequent work by \cite{SmoSch00,FinSch01} and \cite{WilSee01} aimed to reduce
memory footprint and complexity by finding subspaces to expand
functions. The key difference is that these algorithms reduce the
function space \emph{before} seeing labels. While this is suboptimal,
experimental evidence shows that for well designed kernels the basis
functions extracted in this fashion are essentially as good as reduced
set expansions. This is to be expected. After all, the kernel encodes
our prior belief in which function space is most likely to capture the
relevant dependencies between covariates and labels. These
projection-based algorithms generate an $n$-dimensional subspace:
\begin{itemize}
\item Compute the kernel matrix $K_{nn}$ on an $n$-dimensional subspace at
  $O(n^2 \rho d)$ cost.
\item The matrix $K_{nn}$ is inverted at $O(n^3)$ cost.
\item For all observations one computes an explicit feature map by
  projecting data in RKHS onto the set of $n$ basis vectors via
  $\phi(x) = K_nn^{-\frac{1}{2}} \sbr{k(x_1, x), \ldots, k(x_n, x)}$. That is,
  training proceeds at $O(n \rho m^\beta + n^2 m)$ cost.
\item Prediction costs $O(n \rho d)$ computation and $O(n \rho d)$
  memory, as in reduced set methods, albeit with a different set of
  basis functions.
\end{itemize}
Note that these methods temporarily require $O(n^2)$ storage
during training, since we need to be able to multiply with the inverse
covariance matrix efficiently. This allows for solutions to problems
where $m$ is in the order of millions and $n$ is in the order of
thousands: for $n=10^4$ we need approximately $1$GB of memory to
store and invert the covariance matrix. Preprocessing can be
parallelized efficiently. Obtaining a minimal set of observations to
project on is even more difficult and only the recent work of
\cite{DasKem11} provides usable performance guarantees for it.

\paragraph{Multipole Methods}

Fast multipole expansions \citep{LeeGra09,GraMoo03b} offer one avenue
for efficient function expansions whenever the dimensionality of the
underlying space is relatively modest. However, for high dimensions
they become computationally intractable in terms of space
partitioning, due to the curse of dimensionality. Moreover, they are
typically tuned for localized basis functions, specifically the
Gaussian RBF kernel.

\paragraph{Random Subset Kernels}

A promising alternative to \emph{approximating an existing kernel
  function} is to design new ones that are immediately compatible with
scalable data analysis. A recent instance of such work is
the algorithm of \cite{DavGha14} who map observations $x$ into set
membership indicators $s_i(x)$, where $i$ denotes the random
partitioning chosen at iterate $i$ and $s \in \NN$ indicates the
particular set. 

While the paper suggests that the algorithm is scalable to large
amounts of data, it suffers from essentially the same problem as other
feature generation methods insofar as it needs to evaluate set
membership for each of the partitions for all data, hence we have an
$O(k n m)$ computational cost for $n$ partitions into $k$ sets on $m$
observations. Even this estimate is slightly optimistic since we
assume that computing the partitions is independent of the
dimensionality of the data. In summary, while the function class is
potentially promising, its computational cost considerably exceeds
that of the other algorithms discussed below, hence we do not
investigate it further.

\paragraph{Random Kitchen Sinks}

A promising alternative was proposed by \cite{RahRec09}
under the moniker of \emph{Random Kitchen Sinks}. In contrast to
previous work the authors attempt to obtain an \emph{explicit}
function space expansion directly. This works for translation
invariant kernel functions by performing the following operations:
\begin{itemize}
\item Generate a (Gaussian) random matrix $M$ of size $n \times d$.
\item For each observation $x$ compute $M x$ and apply a nonlinearity
  $\psi$ to each coordinate separately, i.e.\ $\phi_i(x) = \psi([M x]_i)$. 
\end{itemize}
The approach requires $O(n \times d)$ storage both at training and test
time. Training costs $O(m^\beta n \rho d)$ operations and prediction
on a new observation costs $O(n \rho d)$. 
This is potentially much cheaper than reduced set kernel
expansions. The experiments in \citep{RahRec09} showed that performance was
very competitive with conventional RBF kernel approaches while
providing dramatically simplified code. 

Note that explicit spectral finite-rank expansions offer potentially much
faster rates of convergence, since the spectrum decays as fast as the
eigenvalues of the associated regularization operator
\cite[]{WilSmoSch01}. Nonetheless Random Kitchen Sinks are a very
attractive alternative due to their simple construction and the
flexility in synthesizing kernels with predefined smoothness properties. 

\paragraph{Fastfood}

Our approach hews closely to random kitchen sinks. However, it
succeeds at overcoming their key obstacle --- the need to \emph{store} and
to \emph{multiply} by a random matrix. This way, fastfood, accelerates Random
Kitchen Sinks from $O(n d)$ to $O(n \log d)$ time while only requiring
$O(n)$ rather than $O(n d)$ storage. The speedup is most
significant for large input dimensions, a common case in many
large-scale applications. For instance, a tiny 32x32x3 image in
the CIFAR-10~\citep{Krizhevsky09} already has 3072 dimensions, and
non-linear function classes have shown to work well for
MNIST~\citep{SchSmo02} and CIFAR-10. Our approach relies on the fact
that Hadamard matrices, when combined with Gaussian scaling matrices,
behave very much like Gaussian random matrices. That means these two
matrices can be used in place of Gaussian matrices in Random Kitchen
Sinks and thereby speeding up the computation for a large range of
kernel functions. The computational gain is achieved because unlike
Gaussian random matrices, Hadamard matrices admit FFT-like
multiplication and require no storage. 

We prove that the Fastfood approximation is unbiased, has low
variance, and concentrates almost at the same rate as Random Kitchen
Sinks. Moreover, we extend the range of applications from radial basis
functions $k(\nbr{x-x'})$ to any kernel that can be written as dot
product $k(\inner{x}{x'})$. Extensive experiments with a wide range of
datasets show that Fastfood achieves similar accuracy to full kernel
expansions and Random Kitchen Sinks while being 100x faster with 1000x
less memory. These improvements, especially in terms of memory usage,
make it possible to use kernel methods even for embedded applications.

Our experiments also demonstrate that Fastfood, thanks to its speedup
in training, achieves state-of-the-art accuracy on the CIFAR-10
dataset~\citep{Krizhevsky09} among permutation-invariant methods.
Table~\ref{tab:compcost} summarizes the computational cost of the
above algorithms.

\begin{table}
  \centering
  \begin{tabular}{l|llll}
    Algorithm & CPU Training & RAM Training & CPU Test & RAM
    Test \\
    \hline
    Reduced set & $O(m^{\beta +1}\rho d + m n \rho d)$ & $O(\gamma m \rho
    d)$ & $O(n \rho d)$ & $O(n \rho d)$ \\
    Low rank & $O(m^\beta n \rho d + m n^2)$ & $O(n^2 + n \rho d)$ 
    & $O(n \rho d)$ & $O(n \rho d)$ \\
    Random Kitchen Sinks & $O(m^\beta n \rho d)$ & $O(n d)$ & $O(n
    \rho d)$ & $O(n d)$ \\
    Fastfood & $O(m^\beta n \log d)$ & $O(n)$ & $O(n \log d)$ & $O(n)$
  \end{tabular}
  \caption{Computational cost for reduced rank expansions. Efficient
    algorithms achieve $\beta = 1$ and typical sparsity coefficients
    are $\rho = 0.01$.}
  \label{tab:compcost}
\end{table}

Having an explicit function expansion is extremely beneficial from an
optimization point of view. Recent advances in both online
\citep{RatBagZin07} and batch \citep{TeoVisSmoLe10,BoyParChuPelEtal10}
subgradient algorithms summarily rely on the ability to compute
gradients in the feature space $\Fcal$ explicitly.

\section{Kernels and Regularization}

For concreteness and to allow for functional-analytic tools we need to
introduce some machinery from regularization theory and functional
analysis. The derivation is kept brief but we aim to be
self-contained. A detailed overview can be found e.g.\ in the books of
\cite{SchSmo02} and \cite{Wahba90}.

\subsection{Regularization Theory Basics}

When solving a regularized risk minimization problem one needs to
choose a penalty on the functions employed. This can be achieved e.g.\
via a simple norm penalty on the coefficients
\begin{align}
  f(x) = \inner{w}{\phi(x)}
  \text{with penalty }
  \Omega[w] = \frac{1}{2} \nbr{w}_2^2.
\end{align}
Alternatively we could impose a smoothness requirement which emphasizes
simple functions over more complex ones via
\begin{align}
  \nonumber
  \Omega[w] = \frac{1}{2} \nbr{f}_\Hcal^2
  \text{ such as }
  \Omega[w] = \frac{1}{2} \sbr{\nbr{f}_{L_2}^2 + \nbr{\nabla f}_{L_2}^2}
\end{align}
One may show that the choice of feature map $\phi(x)$ and RKHS norm
$\nbr{\cdot}_\Hcal$ are connected. This is formalized in the
reproducing property
\begin{align}
  f(x) = \inner{w}{\phi(x)}_\Fcal = \inner{f}{k(x, \cdot)}_\Hcal.
\end{align}
In other words, inner products in feature space $\Fcal$ can be viewed
as inner products in the RKHS. An immediate consequence of the above
is that $k(x,x') = \inner{k(x,\cdot)}{k(x',\cdot)}_\Hcal$. It also
means that whenever norms can be written via regularization operator $P$,
we may find $k$ as the Greens function of the operator. That is,
whenever $\nbr{f}_\Hcal^2 = \nbr{P f}^2_{L_2}$ we have
\begin{align}
  f(x) = \inner{f}{k(x,\cdot)}_\Hcal = \inner{f}{P^{\dag} P
    k(x,\cdot)} = \inner{f}{\delta_x}.
\end{align}
That is, $P^\dag P k(x,\cdot)$ as like a delta distribution on
$f \in \Hcal$. This allows us to identify $P^\dag P$ from $k$ and vice
versa \citep{SmoSchMul98,Girosi98,GirJonPog95,GirAnz93,Wahba90}. Note, though,
that this need not uniquely identify $P$, a property that we will be
taking advantage of when expressing a given kernel in terms of global
and local basis functions. For instance, any isometry $U$ with $U^\top
U = \one$ generates an equivalent $P' = U P$. In other words, there
need not be a unique feature space representation that generates a
given kernel (that said, all such representations are equivalent).

\subsection{Mercer's Theorem and Feature Spaces}
\label{sec:mercer}

A key tool is the theorem of \cite{Mercer09} which guarantees that
kernels can be expressed as an inner product in some Hilbert space.
\begin{theorem}[Mercer]
  Any kernel $k: \Xcal \times \Xcal \to \RR$ satisfying Mercer's
  condition
  \begin{align}
    \int k(x,x') f(x) f(x') dx dx' \geq 0 \text{ for all } f \in
    L_2(\Xcal)
  \end{align}
  can be expanded into
  \begin{align}
    \label{eq:mercer}
    k(x,x') = \sum_j \lambda_j \phi_j(x) \phi_j(x')
    \text{ with }
    \lambda_j \geq 0
    \text{ and }
    \inner{\phi_i}{\phi_j} = \delta_{ij}.
  \end{align}
\end{theorem}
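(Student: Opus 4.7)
The plan is to realize the kernel as an integral operator and invoke the spectral theorem for compact self-adjoint operators. Define $T_k : L_2(\Xcal) \to L_2(\Xcal)$ by $(T_k f)(x) = \int k(x,x') f(x')\, dx'$. Under the usual background hypothesis that $\Xcal$ is compact and $k$ is continuous (or more generally that $k \in L_2(\Xcal \times \Xcal)$ and symmetric), $T_k$ is a Hilbert-Schmidt operator, hence compact, and self-adjointness follows from $k(x,x') = k(x',x)$. Mercer's positivity condition is precisely $\inner{f}{T_k f}_{L_2} \geq 0$, so $T_k$ is a positive operator.

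Next I would apply the spectral theorem for compact self-adjoint operators to $T_k$. This yields a countable orthonormal system $\cbr{\phi_j} \subset L_2(\Xcal)$ of eigenfunctions with eigenvalues $\lambda_j \geq 0$ (non-negativity of the $\lambda_j$ is immediate from positivity of $T_k$) accumulating only at $0$. Completing $\cbr{\phi_j}$ in $\ker T_k$ if necessary, these eigenfunctions span $L_2(\Xcal)$, and the $\phi_j$ associated with $\lambda_j > 0$ are the only ones that can contribute to a representation of $k$.

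The main content is then to show that
\begin{equation*}
k(x,x') = \sum_j \lambda_j \phi_j(x)\phi_j(x')
\end{equation*}
holds, and in fact uniformly under the continuity hypotheses, rather than merely in $L_2(\Xcal \times \Xcal)$. I would proceed in three steps. First, use Parseval applied to $T_k$ in the eigenbasis to show the partial sums $k_N(x,x') = \sum_{j \leq N} \lambda_j \phi_j(x)\phi_j(x')$ converge to $k$ in $L_2(\Xcal \times \Xcal)$. Second, observe that the remainder $r_N(x,x') = k(x,x') - k_N(x,x')$ is itself the kernel of a positive operator (subtract off the rank-$N$ piece corresponding to the top eigenvalues), so $r_N(x,x) \geq 0$ pointwise and $k_N(x,x)$ is monotone increasing in $N$. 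Third, apply Dini's theorem to the monotone sequence $k_N(x,x)$ of continuous functions on the compact $\Xcal$ converging pointwise to the continuous $k(x,x)$ to upgrade to uniform convergence on the diagonal, and then use Cauchy-Schwarz, $|r_N(x,x')|^2 \leq r_N(x,x)\, r_N(x',x')$, to extend uniform convergence off the diagonal.

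The main obstacle is this last step. Producing the spectral decomposition is essentially boilerplate once $T_k$ is known compact, self-adjoint and positive; what requires genuine care is the passage from $L_2$-convergence of the series to pointwise and uniform convergence, since this is what distinguishes Mercer's theorem from a generic Hilbert-Schmidt expansion and is what legitimizes treating $\phi(x) = \rbr{\sqrt{\lambda_j}\, \phi_j(x)}_j$ as a bona fide feature map in a Hilbert space whose inner product reproduces $k$.
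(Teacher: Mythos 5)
The paper does not prove this statement at all: it is quoted as a classical result with a citation to \cite{Mercer09}, and the hypotheses under which the expansion holds (compact $\Xcal$, continuous symmetric $k$, integration against a finite measure) are left implicit. Your proposal is therefore not comparable to a proof in the paper, but it is the standard and correct route: realize $k$ as the kernel of a Hilbert--Schmidt integral operator $T_k$, get compactness and self-adjointness, read Mercer's condition as positivity of $T_k$, apply the spectral theorem to obtain $\lambda_j \geq 0$ and an orthonormal eigenbasis, and then upgrade the $L_2(\Xcal\times\Xcal)$ expansion to pointwise and uniform convergence via nonnegativity of the remainder kernel on the diagonal, Dini's theorem, and Cauchy--Schwarz. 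You correctly identify that this last upgrade is the real content distinguishing Mercer's theorem from a generic Hilbert--Schmidt decomposition. Two points you should tighten if you write it out in full: (i) the claim that the remainder $r_N$ being the kernel of a positive operator forces $r_N(x,x) \geq 0$ \emph{pointwise} needs an argument (e.g.\ testing against normalized bump functions concentrating at $x$, which in turn uses continuity of $r_N$, hence continuity of the eigenfunctions $\phi_j = \lambda_j^{-1} T_k \phi_j$ for $\lambda_j > 0$); and (ii) before invoking Dini you must identify the pointwise limit of $k_N(x,x)$ with $k(x,x)$, which requires combining the $L_2$ convergence with the continuity of both $k$ and the limit of the series, rather than being automatic. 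Neither gap changes the structure; your plan is the classical proof and is sound under the hypotheses you (rightly) added to the paper's informal statement.
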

The key idea of \cite{RahRec08} is to use sampling to
approximate the sum in \eq{eq:mercer}. Note that for trace-class
kernels, i.e.\ for kernels with finite $\sum_j \lambda_j$ we can
normalize the sum to mimic a probability distribution, i.e.\ we have
\begin{align}
  k(x,x') = \nbr{\lambda}_1 \Eb_{\lambda} \sbr{\phi_\lambda(x) \phi_\lambda(x')}
  \text{ where }
  p(\lambda) =
  \begin{cases}
    \nbr{\lambda}_1^{-1} {\lambda} & \text{ if } \lambda \in \cbr{
      \ldots \lambda_j \ldots} \\
    0 & \text{ otherwise}
  \end{cases}
\end{align}
Consequently the following approximation converges for $n \to \infty$
to the true kernel
\begin{align}
  \label{eq:sample-rks}
  \lambda_i \sim p(\lambda) 
  \text{ and } 
  k(x,x') \approx \frac{\nbr{\lambda}_1}{n} \sum_{i=1}^n
  \phi_{\lambda_i}(x) \phi_{\lambda_i}(x') 
\end{align}
Note that the basic connection between random basis functions was well
established, e.g.,\ by \citet{Neal94} in proving that the Gaussian
Process is a limit of an infinite number of basis functions.  A
related strategy can be found in the so-called `empirical' kernel map
\citep{TsuKinAsa02,SchSmo02} where kernels are computed via
\begin{align}
  k(x,x') = \frac{1}{n} \sum_{i=1}^n \kappa(x_i, x) \kappa(x_i, x')
\end{align}
for $x_i$ often drawn from the same distribution as the training
data. An explicit expression for this map is given e.g.\ in
\citep{SmoSchMul98b}. 
The expansion \eq{eq:sample-rks} is possible whenever the following
conditions hold:
\begin{enumerate}
\item An inner product expansion of the form \eq{eq:mercer} is known
  for a given kernel $k$.
\item \label{issue:fast} The basis functions $\phi_j$ are sufficiently inexpensive to
  compute.
\item The norm $\nbr{\lambda}_1$ exists, i.e.,\ $k$ corresponds to a
  trace class operator \cite{Kreyszig89}.
\end{enumerate}
Although condition~\ref{issue:fast} is typically difficult to achieve,
there exist special classes of expansions that are computationally
attractive.  Specifically, whenever the kernels are invariant under the
action of a symmetry group, we can use the eigenfunctions of its
representation to diagonalize the kernel.

\subsection{Kernels via Symmetry Groups}

Of particular interest in our case are kernels with some form of group
invariance since in these cases it is fairly straightforward to identify
the basis functions $\phi_i(x)$. The reason is
that whenever $k(x,x')$ is invariant under a symmetry group
transformation of its arguments, it means that we can find a matching
eigensystem efficiently, simply by appealing to the functions that
decompose according to the irreducible representation of the
group. 
\begin{theorem}
  Assume that a kernel $k: \Xcal^2 \to \RR$ is invariant under the action of a
  symmetry group $\Gcal$, i.e.\ assume that $k(x,x') = k(g \circ x, g
  \circ x')$ holds for all $g \in \Gcal$. In this case, the
  eigenfunctions $\phi_i$ of $k$ can be decomposed according to the irreducible
  representations of $\Gcal$ on $k(x, \cdot)$. The eigenvalues within
  each such representation are identical. 
\end{theorem}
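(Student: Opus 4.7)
The plan is to translate the group-invariance of $k$ into an operator-theoretic commutation relation, then invoke Schur's lemma on the resulting decomposition into irreducible subspaces.

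First I would introduce the integral (Hilbert--Schmidt) operator $T_k$ on $L_2(\Xcal)$ associated with $k$ via $(T_k f)(x) = \int k(x,x')\, f(x')\, dx'$. By Mercer's theorem (and the trace-class assumption implicit in the earlier set-up), $T_k$ is a compact, self-adjoint, positive operator whose eigenfunctions $\phi_i$ and eigenvalues $\lambda_i$ realize the expansion \eqref{eq:mercer}. Next I would define the natural unitary representation $\pi$ of $\Gcal$ on $L_2(\Xcal)$ by $(\pi(g) f)(x) = f(g^{-1}\circ x)$, assuming the action of $\Gcal$ preserves the measure on $\Xcal$ (otherwise one inserts the appropriate Jacobian factor; this is a standard hypothesis for a group acting by symmetries). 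A short change-of-variables calculation using $k(x,x') = k(g\circ x, g\circ x')$ then shows that $T_k$ commutes with every $\pi(g)$:
\begin{align*}
(T_k \pi(g) f)(x) = \int k(x,x') f(g^{-1}\circ x')\, dx'
 = \int k(g\circ x, y)\, f(y)\, dy = (\pi(g) T_k f)(x).
\end{align*}

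Second, I would decompose $L_2(\Xcal)$ into its isotypic components under $\pi$. Standard Peter--Weyl-type theory (for compact $\Gcal$) or the spectral theory of unitary representations (for locally compact $\Gcal$, with appropriate direct-integral care) yields an orthogonal decomposition $L_2(\Xcal) = \bigoplus_\alpha \Hcal_\alpha$, where $\Hcal_\alpha$ is the isotypic component corresponding to the irreducible representation $\rho_\alpha$, and each $\Hcal_\alpha$ further splits as $V_\alpha \otimes M_\alpha$ with $V_\alpha$ carrying $\rho_\alpha$ and $M_\alpha$ the multiplicity space. Because $T_k$ commutes with $\pi$, it preserves each $\Hcal_\alpha$, and by Schur's lemma its restriction to $\Hcal_\alpha$ must be of the form $\one_{V_\alpha} \otimes A_\alpha$ for some self-adjoint operator $A_\alpha$ on $M_\alpha$.

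Finally, I would diagonalize each $A_\alpha$ (it is compact since $T_k$ is) to obtain an orthonormal basis of eigenvectors of $M_\alpha$ with eigenvalues $\lambda_{\alpha,j}$. Tensoring with any orthonormal basis of $V_\alpha$ produces eigenfunctions of $T_k$ that lie in a single copy of $\rho_\alpha$, and the eigenvalue $\lambda_{\alpha,j}$ is repeated $\dim V_\alpha$ times inside that copy. This gives exactly the conclusion: eigenfunctions organize according to irreducible representations of $\Gcal$, and within each such representation the eigenvalue is constant. I expect the main obstacle to be not the algebraic content (Schur's lemma is essentially immediate) but the measure-theoretic bookkeeping required to justify the direct-sum/direct-integral decomposition when $\Gcal$ is only assumed to be a general symmetry group; for the cases of interest in this paper (finite groups, the torus, and $O(d)$) this decomposition is classical and the argument above is rigorous as stated.
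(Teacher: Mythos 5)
The paper does not actually prove this theorem: it states it and defers to Berg, Christensen and Ressel (1984) for details, so there is no internal proof to compare yours against. Your argument is the standard one and is essentially correct: realize $k$ as the integral operator $T_k$ on $L_2(\Xcal)$, check that invariance of $k$ under a measure-preserving action makes $T_k$ commute with the induced unitary representation $\pi$, decompose $L_2(\Xcal)$ into isotypic components, and apply Schur's lemma so that $T_k$ acts as $\one_{V_\alpha}\otimes A_\alpha$ on each component; diagonalizing $A_\alpha$ then produces eigenspaces built from copies of the irreducibles, with a single eigenvalue repeated $\dim V_\alpha$ times on each copy, which is exactly the asserted conclusion. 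Two remarks. First, in your displayed commutation computation the middle term should read $\int k(g^{-1}\circ x, y)\, f(y)\, dy$: after the change of variables $y=g^{-1}\circ x'$ you get $k(x, g\circ y)$, and invariance applied with $g^{-1}$ turns this into $k(g^{-1}\circ x, y)$, matching $(\pi(g)T_k f)(x) = (T_k f)(g^{-1}\circ x)$; as written, $k(g\circ x, y)$ is inconsistent with your definition of $\pi$ (harmless, since the identity holds for all $g$, but worth fixing). Second, the caveat you flag at the end is not cosmetic for this paper: its principal example is the translation group on $\RR^d$, which is non-compact, so $T_k$ is a convolution operator with continuous spectrum, there are no genuine $L_2$ eigenfunctions, and the isotypic decomposition must be replaced by a direct integral --- precisely the Fourier expansion the paper then uses. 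So your proof is rigorous for compact $\Gcal$ and captures the intended content, but for the general statement one should either restrict to compact groups or phrase the conclusion in direct-integral (spectral) form.
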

For details see e.g.\ \cite{BerChrRes84}. This means that knowledge of
a group invariance dramatically simplifies the task of finding an
eigensystem that satisfies the Mercer decomposition. Moreover, by
construction unitary representations are orthonormal. 
\paragraph{Fourier Basis}
To make matters more concrete, consider translation invariant
kernels
\begin{align}
  k(x,x') = k(x-x',0).
\end{align}
The matching symmetry group is translation group with the Fourier
basis admitting a unitary irreducible representation. Corresponding
kernels can be expanded
\begin{align}
  k(x,x') = \int_z dz \exp\rbr{i \inner{z}{x}} \exp\rbr{-i
    \inner{z}{x'}} \lambda(z)
  = \int_z dz \exp\rbr{i \inner{z}{x-x'}} \lambda(z).
\end{align}
This expansion is particularly simple since the translation group
is Abelian.  By construction the function $\lambda(z)$ is obtained by
applying the Fourier transform to $k(x,0)$ --- in this case the above
expansion is simply the inverse Fourier transform. We have
\begin{align}
  \lambda(z) = (2\pi)^{-d} \int dx \exp\rbr{-i \inner{x}{z}} k(x,0).
\end{align}
This is a well studied problem and for many kernels we may obtain
explicit Fourier expansions. For instance, for Gaussians it is a
Gaussian with the inverse covariance structure. For the Laplace kernel
it yields the damped harmonic oscillator spectrum. That is, good
choices of $\lambda$ are
\begin{align}
  \label{eq:rbfspectrum}
  \lambda(z) & = (2\pi)^{-\frac{d}{2}} \sigma^{d} e^{-\frac{1}{2
      \sigma^2} \nbr{z}^2_2} &
  \text{(Gaussian RBF Kernel)} \\
  \lambda(z) & = \bigotimes_{j=1}^l 1_{U_d}(z) & 
  \text{(Matern Kernel)}
  \label{eq:besselspectrum}
\end{align}
Here the first follows from the fact that Fourier transforms of
Gaussians are Gaussians and the second equality follows from the fact
that the Fourier spectrum of Bessel functions can be expressed as
multiple convolution of the unit sphere. For instance, this includes
the Bernstein polynomials as special case for one-dimensional
problems. For a detailed discussion of spectral properties for a broad
range of kernels see e.g.\ \citep{Smola98}. 

\paragraph{Spherical Harmonics}

Kernels that are rotation invariant can be written as an expansion of
spherical harmonics. \citep[Theorem 5]{SmoOvaWil01} shows that
dot-product kernels of the form $k(x,x') = \kappa(\inner{x}{x'})$ can
be expanded in terms of spherical harmonics. This provides necessary
and sufficient conditions for certain families of kernels. Since
\cite{SmoOvaWil01} derive an incomplete characterization involving an
unspecified radial contribution we give a detailed derivation below.
\begin{theorem}
  \label{th:polyexpand}
  For a kernel $k(x,x') = \kappa(\inner{x}{x'})$ with $x,x' \in \RR^d$
  and with analytic $\kappa$, the eigenfunction expansion can be
  written as
\begin{align}
  \label{eq:yln}
  k(x,x') & = \sum_{n} \frac{\Omega_{d-1}}{N(d,n)} \lambda_{n} \nbr{x}^n \nbr{x'}^n \sum_j
  Y_{n,j}^d\rbr{\frac{x}{\nbr{x}}} Y_{n,j}^d\rbr{\frac{x'}{\nbr{x'}}} \\
  \label{eq:legendre}
  & = \sum_{n} \lambda_{n} \nbr{x}^n \nbr{x'}^n L_{n,d}
  \rbr{\frac{\inner{x}{x'}}{\nbr{x} \nbr{x'}}}  \\
  & = \sum_n \frac{N(d,n)}{\Omega_{d-1}} \lambda_n \nbr{x}^n \nbr{x'}^n \int_{S_d} 
  L_{n,d} \rbr{\nbr{x}^{-1} \inner{x}{z}}
  L_{n,d} \rbr{\nbr{x'}^{-1} \inner{x'}{z}} dz
  \label{eq:integral}
\end{align}
Here $Y_{n,j}^d$ are orthogonal polynomials of degree $n$ on the
$d$-dimensional sphere. Moreover, $N(d,n)$ denotes the number of
linearly independent homogeneous polynomials of degree $n$ in $d$
dimensions, and $\Omega_{d-1}$ denotes the volume of the $d-1$
dimensional unit ball. $L_{n,d}$ denotes the Legendre polynomial of
degree $n$ in $d$ dimensions. Finally, $\lambda_n$ denotes the
expansion coefficients of $\kappa$ in terms of $L_{n,d}$.
\end{theorem}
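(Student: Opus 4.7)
The three displayed identities \eq{eq:yln}, \eq{eq:legendre}, and \eq{eq:integral} are equivalent under standard spherical-harmonic identities, so the real work is to establish any one of them; I would pick \eq{eq:legendre} as the starting point. The plan is: (i) Taylor expand $\kappa$; (ii) expand each resulting power of $\inner{\xi}{\xi'}$ on $S^{d-1}\times S^{d-1}$ in the zonal Legendre polynomials $L_{n,d}$; (iii) collect terms and identify the collected coefficients as the Legendre coefficients $\lambda_n$ of $\kappa$; (iv) pass to the two other forms by substituting the addition theorem and a reproducing identity into \eq{eq:legendre}.

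Concretely, using analyticity write $\kappa(t) = \sum_m a_m t^m$ and set $\xi = x/\nbr{x}$, $\xi' = x'/\nbr{x'}$, so that $k(x,x') = \sum_m a_m \nbr{x}^m \nbr{x'}^m \inner{\xi}{\xi'}^m$. Each zonal $\inner{\xi}{\xi'}^m$ is a polynomial in $t = \inner{\xi}{\xi'}$ of degree $m$ and admits the finite decomposition $t^m = \sum_{n \leq m} c_{m,n}^{(d)} L_{n,d}(t)$, with coefficients $c_{m,n}^{(d)}$ obtained by orthogonality of the $L_{n,d}$ against the weight $(1-t^2)^{(d-3)/2}$ on $[-1,1]$. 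Substituting and exchanging the two summations --- legitimate inside the disk of analyticity of $\kappa$ --- produces \eq{eq:legendre}, with the collected coefficient in front of $\nbr{x}^n \nbr{x'}^n L_{n,d}(\inner{\xi}{\xi'})$ identified with the Legendre-series coefficient $\lambda_n$ of $\kappa$.

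For the remaining two forms, one substitutes into \eq{eq:legendre} the addition theorem for spherical harmonics,
\begin{equation*}
\sum_j Y_{n,j}^d(\xi)\, Y_{n,j}^d(\xi') = \frac{N(d,n)}{\Omega_{d-1}}\, L_{n,d}\rbr{\inner{\xi}{\xi'}},
\end{equation*}
which after rearranging the scalar prefactor recovers \eq{eq:yln}; and the reproducing identity
\begin{equation*}
\int_{S_d} L_{n,d}\rbr{\inner{\xi}{z}}\, L_{n,d}\rbr{\inner{\xi'}{z}}\, dz = \frac{\Omega_{d-1}}{N(d,n)}\, L_{n,d}\rbr{\inner{\xi}{\xi'}},
\end{equation*}
which itself follows from the addition theorem together with the orthonormality of the $Y_{n,j}^d$ on $S^{d-1}$, and which delivers \eq{eq:integral}.

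The main technical obstacle is the bookkeeping in step (iii): tracking how the Taylor coefficients $a_m$ of $\kappa$, combined through the Gegenbauer coefficients $c_{m,n}^{(d)}$ of $t^m$, collapse into the Legendre coefficients $\lambda_n$ of $\kappa$, and verifying absolute convergence of the resulting double sum. Analyticity of $\kappa$ on a disk containing $[-\nbr{x}\nbr{x'},\, \nbr{x}\nbr{x'}]$ supplies the uniform control needed for term-by-term manipulation; once that step is in place, the remaining two identities are immediate substitutions of well-known sphere identities.
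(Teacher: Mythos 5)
Your proposal is correct and follows essentially the same route as the paper's proof: expand $\kappa$ in the Legendre polynomials $L_{n,d}$, which are orthogonal with respect to the weight $(1-t^2)^{\frac{d-3}{2}}$, use analyticity to handle the radial factors $\nbr{x}^n\nbr{x'}^n$, and pass to \eq{eq:yln} and \eq{eq:integral} via the addition theorem and the orthonormality of the spherical harmonics $Y_{n,j}^d$. The only difference is one of ordering and explicitness --- you establish \eq{eq:legendre} first through the Taylor/Gegenbauer bookkeeping and then substitute the two sphere identities, whereas the paper first links \eq{eq:yln} and \eq{eq:legendre} by the addition theorem and treats the radial extension more briefly --- so no substantive gap.
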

\begin{proof}
  Equality between the two expansions follows from the addition theorem of spherical
  harmonics of order $n$ in $d$ dimensions. Hence, we only need to
  show that for $\nbr{x} = \nbr{x'} = 1$ the expansion
  $\kappa(\inner{x}{x'}) = \sum_n \lambda_n L_{n,d}(\inner{x}{x'})$
  holds. 

  First, observe, that such an expansion is always possible since the
  Legendre polynomials are orthonormal with respect to the measure
  induced by the $d-1$ dimensional unit sphere, i.e.\ with respect to
  $(1-t^2)^{\frac{d-3}{2}}$. See e.g.\ \cite[Chapter 3]{Hochstadt61} for
  details. Hence they form a complete basis for one-dimensional
  expansions of $\kappa(\xi)$ in terms of $L_{n,d}(\xi)$. Since
  $\kappa$ is analytic, we can extend the homogeneous polynomials
  radially by expanding according to \eq{eq:legendre}. This proves the
  correctness.

  To show that this expansion provides necessary and sufficient
  conditions for positive semidefiniteness, note that $Y_{l,n}^d$ are
  orthogonal polynomials. Hence, if we had $\lambda_{n} < 0$ we could
  use any matching $Y_{l,n}^d$ to falsify the conditions of Mercer's
  theorem. 

  Finally, the last equality follows from the fact that
  $\int_{S_{d-1}} Y_{l,n}^d Y_{l',n}^d = \delta_{l,l'}$, i.e.\ the
  functions $Y_{l,n}^d$ are orthogonal polynomials. Moreover, we use
  the series expansion of $L_{n,d}$ that also established equality
  between the first and second line. 
\end{proof}
The integral representation of \eq{eq:integral} may appear to be
rather cumbersome. Quite counterintuitively, it holds the key to a
computationally efficient expansion for kernels depending on
$\inner{x}{x'}$ only. This is the case since we may sample from a
spherically isotropic distribution of unit vectors $z$ and compute
Legendre polynomials accordingly. As we will see, computing inner
products with spherically isotropic vectors can be accomplished very
efficiently using a construction described in
Section~\ref{sec:sampling-bf}. 
\begin{corollary}
  \label{cor:sample-poly}
  Denote by $\lambda_n$ the coefficients obtained by a Legendre
  polynomial series expansion of $\kappa(\inner{x}{x'})$ and let
  $N(d,n) = \frac{(d+n-1)!}{n!(d-1)!}$ be the number of linearly
  independent homogeneous polynomials of degree $n$ in $d$
  variables. Draw $z_i \sim S_{d-1}$ uniformly from the unit sphere
  and draw $n_i$ from a spectral distribution with $p(n) \propto
  \lambda_n N(d,n)$. Then 
  \begin{align}
    \Eb\sbr{m^{-1} \sum_{i=1}^m L_{n_i,d}(\inner{x}{z_i})
      L_{n_i,d}(\inner{x'}{z_i})} = \kappa(\inner{x}{x'})
  \end{align}
\end{corollary}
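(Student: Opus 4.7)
The plan is to compute the expectation explicitly using the reproducing property of Legendre polynomials on the sphere. By linearity, the expectation of the $m$-sample average coincides with the expectation of a single summand $L_{n_i,d}(\inner{x}{z_i}) L_{n_i,d}(\inner{x'}{z_i})$. I would condition on the random degree $n_i = n$ and evaluate the inner integral over $z \sim S_{d-1}$ first, then marginalize over the discrete law of $n_i$.

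The crucial identity is the addition/reproducing formula
\[
\int_{S_{d-1}} L_{n,d}(\inner{x}{z})\, L_{n,d}(\inner{x'}{z})\, \frac{dz}{\Omega_{d-1}} = \frac{L_{n,d}(\inner{x}{x'})}{N(d,n)} \qquad (\nbr{x} = \nbr{x'} = 1),
\]
which follows from expressing each Legendre polynomial in terms of spherical harmonics $Y_{n,j}^d$---the same addition identity used to pass from \eq{eq:yln} to \eq{eq:legendre} in Theorem~\ref{th:polyexpand}---combined with the orthonormality $\int_{S_{d-1}} Y_{n,j}^d(z)\, Y_{n,j'}^d(z)\, dz = \delta_{j,j'}$. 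After this step, the conditional expectation reduces cleanly to $L_{n,d}(\inner{x}{x'}) / N(d,n)$.

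Marginalizing against $p(n) = Z^{-1} \lambda_n N(d,n)$, where $Z = \sum_n \lambda_n N(d,n)$, the factors $N(d,n)$ cancel---precisely the purpose of the $N(d,n)$ weighting in the sampling distribution---and the expectation reduces to $Z^{-1} \sum_n \lambda_n L_{n,d}(\inner{x}{x'})$. By \eq{eq:legendre} restricted to $\nbr{x} = \nbr{x'} = 1$ this sum is $Z^{-1} \kappa(\inner{x}{x'})$, which matches the claim up to the normalizing constant $Z$, analogous to the $\nbr{\lambda}_1$ prefactor that appears in \eq{eq:sample-rks}. The main obstacles are bookkeeping: verifying that $\sum_n \lambda_n N(d,n) < \infty$ so that $p$ is a genuine probability mass function (which follows from a trace-class assumption on $\kappa$) and reinstating the $\nbr{x}^n \nbr{x'}^n$ homogeneity factors from \eq{eq:integral} to extend beyond unit-norm inputs; the underlying spherical identity itself is classical.
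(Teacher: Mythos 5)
Your argument is correct and is essentially the paper's own: Corollary~\ref{cor:sample-poly} is just \eq{eq:integral} of Theorem~\ref{th:polyexpand} read as an expectation over uniform $z \sim S_{d-1}$ and $n \sim p(n) \propto \lambda_n N(d,n)$, and the per-degree reproducing identity you invoke (addition theorem plus orthonormality of the $Y_{n,j}^d$) is precisely how that equation is established. Your caveat is also accurate: as stated, the expectation recovers $\kappa(\inner{x}{x'})$ only up to the normalizer $\sum_n \lambda_n N(d,n)$ (and up to the $\nbr{x}^n \nbr{x'}^n$ factors for non-unit inputs), a constant the statement implicitly absorbs just as \eq{eq:sample-rks} does with $\nbr{\lambda}_1$.
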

In other words, provided that we are able to compute the Legendre
polynomials $L_{n,d}$ efficiently, and provided that it is possible to
draw from the spectral distribution of $\lambda_n N(d,n)$, we have an
efficient means of computing dot-product kernels. 

For kernels on the symmetric group that are invariant under group
action, i.e.\ kernels satisfying $k(x,x') = k(g \circ x, g \circ x')$
for permutations, expansions using Young Tableaux can be found in
\citep{HuaGueGui07}. A very detailed discussion of kernels on
symmetry groups is given in \cite[Section 4]{Kondor08}. However,
efficient means of computing such kernels rapidly still remains an
open problem.

\subsection{Explicit Templates}

In some cases expanding into eigenfunctions of a symmetry group may be
undesirable. For instance, the Fourier basis is decidedly nonlocal and
function expansions using it may exhibit undesirable local deviations,
effectively empirical versions of the well-known Gibbs
phenomenon. That is, local changes in terms of observations can have
far-reaching global effects on observations quite distant from the
observed covariates. 

This makes it desirable to expand estimates in terms of localized
basis functions, such as Gaussians, Epanechikov kernels, B-splines or
Bessel functions. It turns out that the latter is just as easily
achievable as the more commonplace nonlocal basis function
expansions. Likewise, in some cases the eigenfunctions are expensive
to compute and it would be desirable to replace them with possibly
less statistically efficient alternatives that offer cheap
computation.  

Consequently we generalize the above derivation to general nonlinear
function classes dependent on matrix multiplication or distance
computation with respect to spherically symmetric sets of
instances. The key is that the feature map depends on $x$
only via
\begin{align}
  \label{eq:kappatemplate}
  \phi_z(x) := \kappa(x^\top z, \nbr{x}, \nbr{z}) \text{ for } x, z \in \RR^d
\end{align}
That is, the feature map depends on $x$ and $z$ only in terms of their
norms and an inner product between both terms. Here the dominant cost
of evaluating $\phi_z(x)$ is the inner product $x^\top z$. All other
operations are $O(1)$, provided that we computed $\nbr{x}$ and
$\nbr{z}$ previously as a one-off operation. Eq.~\eq{eq:kappatemplate}
includes the squared distance as a special case:
\begin{align}
  \nbr{x-z}^2 & = \nbr{x}^2 + \nbr{z}^2 - 2 x^\top z  \text{ and } \\
  \kappa(x^\top z, \nbr{x}, \nbr{z}) & := \kappa(\nbr{x-z}^2)
\end{align}
Here $\kappa$ is suitably normalized, such as $\int dz \kappa(z) =
1$. In other words, we expand $x$ in terms of how close the
observations are to a set of well-defined anchored basis functions. It
is clear that in this case
\begin{align}
  \label{eq:localized}
  k(x,x') & := \int d\mu(z) \kappa_z(x-z) \phi_z(x'-z) 
\end{align}
is a kernel function since it can be expressed as an inner
product. Moreover, provided that the basis functions $\phi_z(x)$ are
well bounded, we can use sampling from the (normalized) measure
$\mu(z)$ to obtain an approximate kernel expansion 
\begin{align}
  k(x,x') = \frac{1}{n} \sum_{i=1}^n \kappa(x-z_i) \kappa(x'-z_i).
\end{align}
Note that there is no need to obtain an explicit closed-form expansion
in \eq{eq:localized}. Instead, it suffices to show that this expansion
is well-enough approximated by draws from $\mu(z)$.

\paragraph{Gaussian RBF Expansion}

For concreteness consider the following:
\begin{align}
  \label{eq:gaussgauss}
  \phi_z(x) & = \exp\sbr{-\frac{a}{2} \sbr{\nbr{x}^2 - 2 x^\top z + \nbr{z}^2}} 
  \text{ and }
  \mu(z) := \exp\sbr{-\frac{b}{2} \nbr{z}^2}
\end{align}
Integrating out $z$ yields
\begin{align}
  k(x,x') \propto \exp\sbr{-\frac{a}{2} \frac{b}{2a+b}
    \sbr{\nbr{x}^2 + \nbr{x'}^2}
    -\frac{a^2}{4a+2b} \nbr{x-x'}^2}.
\end{align}
This is a locally weighted variant of the conventional Gaussian RBF
kernel, e.g.\ as described by \cite{Haussler99}. While
this loses its translation invariance, one can easily verify that for
$b \to 0$ it converges to the conventional kernel. Note that the key
operation in generating an explicit kernel expansion is to evaluate
$\nbr{z_i - x}$ for all $i$. We will explore settings where this can
be achieved for $n$ locations $z_i$ that are approximately random at
only $O(n \log d)$ cost, where $d$ is the dimensionality of the
data. Any subsequent scaling operation is $O(n)$, hence negligible in
terms of aggregate cost. Finally note that by dividing out the terms
related only to $\nbr{x}$ and $\nbr{x'}$ respectively we obtain a
'proper' Gaussian RBF kernel. That is, we use the following features:
\begin{align}
  \tilde\phi_z(x) & = \exp\sbr{-\frac{a}{2}\sbr{\frac{2a}{2a+b}
      \nbr{x}^2 - 2x^\top z + \nbr{z}^2}} \\
  \text{ and }
  \mu(z) & = \exp\sbr{-\frac{b}{2} \nbr{z}^2}.
\end{align}
Weighting functions $\mu(z)$ that are more spread-out than a Gaussian
will yield basis function expansions that are more adapted to
heavy-tailed distributions. It is easy to see that such expansions can
be obtained simply by specifying an \emph{algorithm} to draw $z$
rather than having to express the kernel $k$ in closed form at all. 

\paragraph{Polynomial Expansions}

One of the main inconveniences in computational evaluation of
Corollary~\ref{cor:sample-poly} is that we need to evaluate the
associated Legendre polynomials $L_{n,d}(\xi)$ directly. This is
costly since currently there are no known $O(1)$ expansions for the
\emph{associate} Legendre polynomials, although approximate $O(1)$ variants for
the regular Legendre polynomials exist \citep{BogMicFos12}. 
This problem can be alleviated by considering the following form of
polynomial kernels:
\begin{align}
  \label{eq:directpoly}
  k(x,x') = \sum_p \frac{c_p}{\abr{S_{d-1}}} \int_{S_{d-1}} \inner{x}{v}^p
  \inner{x'}{v}^p dv
\end{align}
In this case we only need the ability to draw from the uniform
distribution over the unit sphere to compute a kernel. The price to be paid for this is
that the effective basis function expansion is rather more complex. To
compute it we use the following tools from the theory of special functions.
\begin{itemize}
\item For fixed $d \in \NN_0$ the associated kernel is a homogeneous
  polynomial of degree $d$ in $x$ and $x'$ respectively and it only
  depends on $\nbr{x}, \nbr{x'}$ and the cosine of the angle $\theta :=
  \frac{\inner{x}{x'}}{\nbr{x} \nbr{x'}}$ between both vectors. This
  follows from the fact that convex combinations of homogeneous
  polynomials remain homogeneous polynomials. Moreover, the dependence
  on lenghts and $\theta$ follows from the fact that the expression is
  rotation invariant.
\item The following integral has a closed-form solution for $b \in
  \NN$ and for even $a$.
  \begin{align}
    \int_{-1}^{1} x^a (1-x^2)^{\frac{b}{2}} dx =
    \frac{\Gamma\rbr{\frac{a+1}{2}} \Gamma\rbr{\frac{b+3}{2}}}{\Gamma\rbr{\frac{a+b+3}{2}}}
  \end{align}
  For odd $a$ the integral vanishes, which follows immediately from
  the dependence on $x^a$ and the symmetric domain of integration
  $[-1, 1]$.
\item The integral over the unit-sphere $S_{d-1} \in \RR^d$ can be
  decomposed via
  \begin{align}
    \int_{S_{d-1}} f(x) dx = \int_{-1}^1 \sbr{\int_{S_{d-2}} f\rbr{x_1,
    x_2 \sqrt{1-x_1^2}}  dx_2 } (1-x_1)^{\frac{d-3}{2}} dx_1
  \end{align}
  That is, we decompose $x$ into its first coordinate $x_1$ and the
  remainder $x_2$ that lies on $S_{d-2}$ with suitable rescaling by
  $(1-x_1)^{\frac{d-3}{2}}$. Note the exponent of $\frac{d-3}{2}$ that
  arises from the curvature of the unit sphere. See e.g.\
  \cite[Chapter 6]{Hochstadt61} for details.
\end{itemize}
While \eq{eq:directpoly} offers a simple expansion for sampling, it is
not immediately useful in terms of describing the kernel as a function
of $\inner{x}{x'}$. For this we need to solve the integral in
\eq{eq:directpoly}. Without loss of generality we may assume that $x =
(x_1, x_2, 0, \ldots 0)$ and that $x' = (1, 0, \ldots 0)$ with
$\nbr{x} = \nbr{x'} = 1$. In this case a single summand of \eq{eq:directpoly} becomes
\begin{align}
  \int_{S_{d-1}} \inner{x}{v}^p \inner{x'}{v}^p dv & = 
  \int_{S_{d-1}} (v_1 x_1 + v_2 x_2)^p v_1^p dv \\
  \nonumber
  & = \sum_{i=0}^p {p \choose i} x_1^{p-i} x_2^i \int_{-1}^1  v_1^{2p - i}
  (1-v_1^2)^{\frac{i + d - 3}{2}} dv_1 \int_{S_{d-2}} v_2^i dv \\
  \nonumber
  & = \sum_{i=0}^p {p \choose i} x_1^{p-i} x_2^i \int_{-1}^1  v_1^{2p - i}
  (1-v_1^2)^{\frac{i + d - 3}{2}} dv_1 \int_{-1}^1 v_2^i
  (1-v_2^2)^{\frac{d-4}{2}} dv_2 \abr{S_{d-3}} \\
  & = \abr{S_{d-3}} \sum_{i=0}^p {p \choose i} x_1^{p-i} x_2^i 
  \frac{\Gamma\rbr{\frac{2p-i+1}{2}}\Gamma\rbr{\frac{i+d-1}{2}}}{\Gamma\rbr{\frac{2p + d}{2}}}
  \frac{\Gamma\rbr{\frac{i+1}{2}} \Gamma\rbr{\frac{d-2}{2}}}{\Gamma\rbr{\frac{i+d-1}{2}}}
\end{align}
Using the fact that $x_1 = \theta$ and $x_2 = \sqrt{1 - \theta^2}$ we
have the full expansion of \eq{eq:directpoly} via
\begin{align}
  \nonumber
  k(x,x') = \sum_p \nbr{x}^p \nbr{x'}^p c_p
  \frac{\abr{S_{d-3}}}{\abr{S_{d-1}}}
  \sum_{i=0}^p \theta^{p-i} \sbr{1 - \theta^2}^{\frac{i}{2}}
  {p \choose i}
  \frac{\Gamma\rbr{\frac{2p-i+1}{2}}\Gamma\rbr{\frac{i+d-1}{2}}}{\Gamma\rbr{\frac{2p + d}{2}}}
  \frac{\Gamma\rbr{\frac{i+1}{2}} \Gamma\rbr{\frac{d-2}{2}}}{\Gamma\rbr{\frac{i+d-1}{2}}}
\end{align}
The above form is quite different from commonly used inner-product
kernels, such as an inhomogeneous polynomial $\rbr{\inner{x}{x'} +
  d}^p$. That said, the computational savings are considerable and the 
expansion bears sufficient resemblance to warrant its use due to
significantly faster evaluation.

\section{Sampling Basis Functions}
\label{sec:sampling-bf}

\subsection{Random Kitchen Sinks}

We now discuss computationally efficient strategies for approximating
the function expansions introduced in the previous section, beginning
with Random Kitchen Sinks of \cite{RahRec08}, as described in
Section~\ref{sec:mercer}.  
Direct use for Gaussian RBF kernels yields the following algorithm to
approximate kernel functions by explicit feature construction:
\begin{algorithmic}
  \STATE {\bfseries input} Scale $\sigma^2$, $n$, $d$
  \STATE Draw $Z \in \RR^{n \times d}$ with iid entries $Z_{ij} \sim
  \Ncal(0, \sigma^{-2})$.
  \FORALL{x}
  \STATE Compute empirical feature map via 
  $\displaystyle \phi_j(x) = \frac{c}{\sqrt{n}} \exp(i [Z x]_j)$
  \ENDFOR
\end{algorithmic}
As discussed previously, and as shown by \cite{RahRec09}, the
associated feature map converges in expectation to the Gaussian RBF
kernel. Moreover, they also show that this convergence occurs with
high probability and at the rate of independent empirical
averages. While this allows one to use primal space methods, the
approach remains limited by the fact that we need to store $Z$ and,
more importantly, we need to compute $Z x$ for each $x$. That is, each
observation costs $O(n \cdot d)$ operations. This seems wasteful,
given that we are really only multiplying $x$ with a `random' matrix
$Z$, hence it seems implausible to require a high degree of accuracy
for $Z x$.

The above idea can be improved to extend matters beyond a Gaussian RBF
kernel and to reduce the memory footprint in computationally expensive
settings. We summarize this in the following two remarks:
\begin{remark}[Reduced Memory Footprint]
To avoid storing the Gaussian random matrix $Z$ we recompute $Z_{ij}$
on the fly. Assume that we have access to
a random number generator which takes samples from the uniform
distribution $\xi \sim U[0,1]$ as input and emits samples from a Gaussian,
e.g.\ by using the inverse cumulative distribution function $z =
F^{-1}(\xi)$. Then we may replace the random number generator by a
hash function via $\xi_{ij} = N^{-1} h(i,j)$ where $N$ denotes the
range of the hash, and subsequently $Z_{ij} =
F^{-1}(\xi_{ij})$. 
\end{remark}
Unfortunately this variant is computationally even more costly than
Random Kitchen Sinks, its only benefit being the $O(n)$ memory
footprint relative to the $O(n d)$ footprint for random kitchen
sinks. To make progress, a more effective approximation of the
Gaussian random matrix $Z$ is needed. 

\subsection{Fastfood}
\label{sec:gaussian-kernels}

For simplicity we begin with the Gaussian RBF case and extend it to more general
spectral distributions subsequently. Without loss of generality assume
that $d = 2^l$ for some $l \in \NN$.\footnote{If this is not the case,
  we can trivially pad the vectors with zeros until $d=2^l$ holds.}
For the moment assume that $d = n$.  The matrices that we consider
instead of $Z$ are parameterized by a product of diagonal matrices and
the Hadamard matrix:
\begin{align}
  \label{eq:fastfood}
  V := \frac 1 {\sigma\sqrt{d}}S H G \Pi H B.
\end{align}
Here $\Pi \in \cbr{0, 1}^{d \times d}$ is a permutation matrix and $H$
is the Walsh-Hadamard matrix.\footnote{We conjecture that $H$ can be replaced
by any matrix $T\in\RR^{d \times d}$, such that $T/\sqrt{d}$ is orthonormal, $\max_{ij}|T_{ij}| = O(1)$, i.e.~$T$ is smooth, and $Tx$ can be computed in $O(d\log d)$ time. A natural candidate is the Discrete Cosine Transform (DCT).} 
$S, G$ and $B$ are all \emph{diagonal}
random matrices. More specifically, $B$ has random $\cbr{\pm 1}$
entries on its main diagonal, $G$ has random Gaussian entries, and $S$
is a random scaling matrix.  $V$ is then used to compute the feature
map.

The coefficients for $S, G, B$ are computed once and stored. On the
other hand, the Walsh-Hadamard matrix is \emph{never} computed
explicitly. Instead we only multiply by it via the fast Hadamard
transform, a variant of the FFT which allows us to compute $H_d x$ in
$O(d \log d)$ time. The Hadamard matrices are defined as follows:
\begin{align*}
  H_2 := \mymatrix{rr}{
    1 & 1 \\ 1 & -1
  }
  \text{ and }
  H_{2d} := \mymatrix{rr}{
    H_d & H_d \\ H_d & -H_d
  }.
\end{align*}
When $n > d,$ we replicate \eq{eq:fastfood} for $n/d$ independent random
matrices $V_i$ and stack them via 
$V^T = [V_1, V_2, \ldots V_{n/d}]^T$
until we have enough dimensions. The feature map for Fastfood is then
defined as
\begin{align}
  \label{eq:fastfood-define}
  \phi_j(x) = n^{-\frac{1}{2}} \exp(i [V x]_j).
\end{align}
Before proving that in expectation this transform yields a Gaussian
random matrix, let us briefly verify the computational efficiency of
the method.
\begin{lemma}[Computational Efficiency]
  The features of \eq{eq:fastfood-define} can be computed at $O(n \log
  d)$ cost using $O(n)$ permanent storage for $n \geq d$. 
\end{lemma}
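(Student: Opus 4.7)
The plan is to account for the storage and per-application cost of each factor in the decomposition $V = \frac{1}{\sigma \sqrt d} S H G \Pi H B$, and then scale up by the number of stacked blocks $n/d$.

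First I would address storage. The matrices $B$, $G$, and $S$ are all diagonal, so each takes $O(d)$ space to store. The permutation $\Pi$ can be stored as an array of $d$ indices, another $O(d)$. The Hadamard matrix $H$ is never materialized — it is defined recursively, so it contributes zero storage. Hence one block $V_i$ requires $O(d)$ storage, and stacking $n/d$ independent blocks gives a total of $(n/d) \cdot O(d) = O(n)$ permanent storage.

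Next I would handle the per-example compute cost of a single block. Applying each diagonal matrix ($B$, $G$, $S$) to a $d$-vector costs $O(d)$; applying the permutation $\Pi$ costs $O(d)$; and each of the two multiplications by $H$ costs $O(d \log d)$ via the fast Walsh–Hadamard transform, exactly as the FFT-style recursion on the definition of $H_{2d}$ in terms of $H_d$ yields. The final scalar multiplication by $1/(\sigma\sqrt d)$ is $O(d)$. Summing, one block $V_i x$ costs $O(d \log d)$. Repeating this for the $n/d$ stacked blocks yields a total of $(n/d)\cdot O(d \log d) = O(n \log d)$ to form the vector $Vx \in \RR^n$. Finally, applying the pointwise map $y \mapsto n^{-1/2}\exp(i y)$ coordinatewise costs $O(n)$, which is absorbed into $O(n \log d)$.

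I do not anticipate any real obstacle — this lemma is essentially a bookkeeping exercise. The only subtlety worth flagging is that the $O(d \log d)$ cost of the Hadamard multiply requires the standard in-place butterfly recursion (which needs no extra storage beyond the input vector), so the $O(n)$ storage bound is not inflated during the computation itself.
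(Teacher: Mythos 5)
Your proposal is correct and follows essentially the same bookkeeping as the paper: $O(d)$ storage and $O(d)$ work for each diagonal matrix and the permutation, $O(d\log d)$ per block for the implicit fast Hadamard transforms, $O(n)$ for the pointwise complex exponential, and multiplication by the $n/d$ independent blocks to reach $O(n\log d)$ time and $O(n)$ storage. No gaps; your remark about the in-place butterfly recursion is a sensible extra detail the paper leaves implicit.
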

\begin{proof}
  Storing the matrices $S, G, B$ costs $3n$ entries and $3n$
  operations for a multiplication. The permutation matrix $\Pi$ costs
  $n$ %d$
  entries and $n$ operations. The Hadamard matrix itself requires no
  storage since it is only implicitly represented. Furthermore, the
  fast Hadamard transforms costs $O(n \log d)$ operations to carry out
  since we have $O(d \log d)$ per block and $n/d$ blocks. Computing
  the Fourier basis for $n$ numbers is an $O(n)$ operation. Hence the
  total CPU budget is $O(n \log d)$ and the storage is $O(n)$.
\end{proof}
Note that the construction of $V$ is analogous to that of
\cite{DasKumSar11}. We will use these results in establishing a
sufficiently high degree of decorrelation between rows of $V$. Also
note that multiplying with a longer chain of Walsh-Hadamard matrices
and permutations would yield a distribution closer to independent
Gaussians. However, as we shall see, two matrices provide a
sufficient amount of decorrelation. 

\subsection{Basic Properties}

Now that we showed that the above operation is \emph{fast}, let us give some
initial indication why it is also \emph{useful} and how the remaining
matrices $S, G, B, \Pi$ are defined. 
\begin{description}
\item[Binary scaling matrix $B$:] This is a diagonal matrix with
  $B_{ii} \in \cbr{\pm 1}$ drawn iid from the uniform distribution
  over $\cbr{\pm 1}$. The initial 
  $HBd^{-\frac 1 2}$ acts as an isometry that densifies the input, as
  pioneered by \cite{AilCha09}.
\item[Permutation $\Pi$:] It ensures that the rows of the
  two Walsh-Hadamard matrices are incoherent relative to each other.
  $\Pi$ can be stored efficiently as a lookup table at $O(d)$
  cost and it can be generated by sorting random
  numbers. 
\item[Gaussian scaling matrix $G$:] This is a diagonal matrix whose
  elements $G_{ii} \sim \Ncal(0, 1)$ are drawn iid from a
  Gaussian. The next Walsh-Hadamard matrices $H$ will allow us to 'recycle'
  $n$ Gaussians to make the resulting matrix closer to
  an iid Gaussian. The goal of the preconditioning steps above is to 
  guarantee that no single $G_{ii}$ can influence the output too much and
  hence provide near-independence.
\item[Scaling matrix $S$:] Note that the length of all 
  rows of $HG \Pi HB$ are constant as equation~(\ref{eq:samelength})
  shows below.
  In the Gaussian case $S$
  ensures that the length distribution of the row of $V$ are independent 
  of each other.
  In the more general case, one may also adjust the capacity of the function
  class via a suitably chosen scaling matrix $S$. That is, large
  values in $S_{ii}$ correspond to high complexity basis functions
  whereas small $S_{ii}$ relate to simple functions with low total
  variation. 
  For the RBF kernel we choose
  \begin{align}
    \label{eq:sii}
    S_{ii} = s_i \nbr{G}_\mathrm{Frob}^{-\frac{1}{2}} 
    \text{ where }
    p(s_{i}) \propto r^{d-1} e^{-\frac{r^2}{2}}.
  \end{align}
  Thus $s_i$ matches the radial part of a normal
  distribution and we rescale it using the Frobenius norm of $G$.
  % Makes no difference as only the cos(V(x-x')) part matters from the feature 
  % map and cos(a)=cos(-a).
  % The former decorrelates rows. 
\end{description}
We now analyze the distribution of entries in $V$. 
\begin{description}
\item[The rows of $HG \Pi HB$ have the same length.]
To compute their length we take
%\begin{small}
\begin{align}
    \label{eq:samelength}
    l^2 := \sbr{HG \Pi HB (HG \Pi HB)^\top}_{jj} 
    = [HG^2H]_{jj}d = \sum_i H_{ij}^2 G_{ii}^2d = \nbr{G}^2_\mathrm{Frob}d
\end{align}
In this we used the fact that $H^\top H = d \one$ and moreover that
$|H_{ij}| = 1$. Consequently, rescaling the entries by
$\nbr{G}_\mathrm{Frob}^{-\frac{1}{2}}d^{-\frac{1}{2}}$ yields rows of
length 1.
\item[Any given row of $HG \Pi HB$ is iid Gaussian.]
  Each entry of the matrix 
  $$[HG \Pi HB]_{ij} = B_{jj}H_i^TG\Pi H_j$$ 
  is zero-mean Gaussian as it consists of a sum of zero-mean
  independent Gaussian random variables.  Sign changes retain
  Gaussianity. Also note that $\var{[HG \Pi HB]_{ij}} = d$.  $B$
  ensures that different entries in $[HG\Pi HB]_{i \cdot}$ have $0$
  correlation.  Hence they are iid Gaussian (checking first and second
  order moments suffices).
\item[The rows of $SHG \Pi HB$ are Gaussian.]
  Rescaling the length of a Gaussian vector using \eq{eq:sii} retains
  Gaussianity. 
  % Sign changes leave this unchanged for zero-mean Gaussians. 
  Hence the rows of $SHG\Pi HB$ are Gaussian, albeit not independent.
\end{description}
\begin{lemma}
  \label{lem:key}
  The expected feature map recovers the Gaussian RBF kernel, i.e.,\ 
  \begin{align}
    \nonumber
    \Eb_{S,G,B,\Pi} \sbr{\overline{\phi(x)}^\top \phi(x')}
    = %\propto 
    e^{-\frac{\nbr{x-x'}^2} {2\sigma^2}}.
  \end{align}
  Moreover, the same holds for $V' = \frac 1 {\sigma\sqrt{d}}HG \Pi HB$. 
\end{lemma}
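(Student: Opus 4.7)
The plan is to reduce the statement to marginal Gaussianity of $[Vu]_j$ for $u := x'-x$ and an arbitrary row index $j$. Writing $\overline{\phi(x)}^\top \phi(x') = \frac{1}{n}\sum_{j=1}^n e^{i[Vu]_j}$ and using linearity of expectation, it suffices to show $\Eb[e^{i[Vu]_j}] = e^{-\nbr{u}^2/(2\sigma^2)}$ for each $j$; by the characteristic function of a Gaussian, this reduces to establishing that $[Vu]_j \sim \Ncal(0, \nbr{u}^2/\sigma^2)$.

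First I would analyse the inner factor $HG\Pi HBu$. Setting $v := \Pi HBu$, the isometry of $B$ and $\Pi$ together with $H^\top H = dI$ give $\nbr{v}^2 = d\nbr{u}^2$, while
\begin{align*}
[HG\Pi HBu]_j \;=\; \sum_i H_{ji}\, G_{ii}\, v_i
\end{align*}
is a linear combination of the iid standard Gaussians $G_{ii}$. Hence, conditional on $B,\Pi$, it is $\Ncal\!\bigl(0,\sum_i H_{ji}^2 v_i^2\bigr) = \Ncal(0,d\nbr{u}^2)$, a distribution independent of $B,\Pi$ and therefore also the unconditional marginal. Dividing by $\sigma\sqrt d$ immediately proves the claim for $V' = (\sigma\sqrt d)^{-1} HG\Pi HB$.

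For the full $V$ the subtlety is that $S_{jj}$ depends on $\nbr{G}_\mathrm{Frob}$, so $S$ and $G$ are correlated. I would decompose $G = \nbr{G}_\mathrm{Frob}\cdot\tilde G$, where $\nbr{G}_\mathrm{Frob} \sim \chi_d$ and $\tilde G$ is independent and uniform on the unit sphere in $\RR^d$ (the standard radial/angular factorisation of a standard Gaussian in $\RR^d$). Substituting and using $|H_{ji}|=1$, so that $\nbr{H_{j,\cdot}\odot v} = \nbr{v}$, one gets
\begin{align*}
[HG\Pi HBu]_j \;=\; \nbr{G}_\mathrm{Frob}\,\inner{H_{j,\cdot}\odot v}{\tilde G} \;=\; \nbr{G}_\mathrm{Frob}\cdot \nbr{v}\cdot U,
\end{align*}
where $U$ has the distribution of the first coordinate of a uniform unit vector on $S_{d-1}$ and is independent of $\nbr{G}_\mathrm{Frob}$ by rotational invariance of $\tilde G$. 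Using the scaling \eq{eq:sii} to absorb the Frobenius-norm factor yields $[Vu]_j = s_j U \nbr{u}/\sigma$. Finally, $s_j U \sim \Ncal(0,1)$: by construction $s_j$ has the $\chi_d$ density $\propto r^{d-1} e^{-r^2/2}$, matching the radial component of a standard $d$-dimensional Gaussian, while $U$ matches the first angular coordinate; their independent product reproduces the first entry of a standard Gaussian in $\RR^d$, which is $\Ncal(0,1)$. Therefore $[Vu]_j \sim \Ncal(0,\nbr{u}^2/\sigma^2)$, as required.

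The main obstacle will be cleanly disentangling the dependence between $S$ and $G$; everything else (orthogonality of $H$, isometry of $B,\Pi$, Gaussian characteristic function) is routine. The crux is the length/direction factorisation of $G$: once this is in place, the chi-distributed $s_j$ exactly replays the role of $\nbr{G}_\mathrm{Frob}$, while the sphere coordinate $U$ is independent of both, delivering the desired marginal Gaussianity of $[Vu]_j$ without ever having to argue that the entire row of $V$ is a genuine iid Gaussian vector.
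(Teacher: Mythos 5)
Your proposal is correct, and its logical skeleton matches the paper's: establish Gaussianity of the relevant projection, then invoke the Gaussian characteristic function as in Rahimi--Recht. Where you differ is in the level at which Gaussianity is proved and how the $S$--$G$ coupling is handled. The paper first argues (in the ``Basic Properties'' discussion) that an entire row of $HG\Pi HB$ is an iid Gaussian vector --- entrywise Gaussianity from the $\pm 1$ entries of $H$, zero cross-correlation from orthogonality of the columns --- and then disposes of the scaling matrix with the one-line remark that ``rescaling the length of a Gaussian vector using \eq{eq:sii} retains Gaussianity,'' so that each row of $V$ is $\Ncal(0,\sigma^{-2}I_d)$ and the expectation follows by citing the Random Kitchen Sinks construction. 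You instead work directly with the scalar $[V(x'-x)]_j$: for $V'$ you condition on $B,\Pi$ and observe the conditional variance $d\nbr{u}^2$ is constant, and for $V$ you make the paper's terse rescaling remark rigorous via the radial/angular factorisation $G=\nbr{G}_\mathrm{Frob}\tilde G$, so that the independent $\chi_d$ draw $s_j$ exactly replaces $\nbr{G}_\mathrm{Frob}$ and $s_j U\sim\Ncal(0,1)$. This buys a self-contained proof of exactly what is needed (the one-dimensional marginal), and it is more explicit precisely where the paper is most elliptical; the paper's route buys the stronger statement that whole rows of $V$ are $\Ncal(0,\sigma^{-2}I_d)$, which it reuses elsewhere (e.g.\ in the variance and concentration analysis). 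One small caveat: you implicitly read the normalisation in \eq{eq:sii} as cancelling the full factor $\nbr{G}_\mathrm{Frob}$ (as in the algorithm block using $\nbr{G}^{-1}_\mathrm{Frob}d^{-1/2}HG\Pi HB$), which is the intended construction even though \eq{eq:sii} displays the exponent $-\tfrac12$; with the literal exponent the Frobenius factor would not cancel, so it is worth flagging that discrepancy rather than silently absorbing it.
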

\begin{proof}
  We already proved that any given row in $V$ is a random Gaussian
  vector with distribution $\Ncal(0, \sigma^{-2}I_d)$, 
  hence we can directly appeal to the construction of
  \cite{RahRec08}. This also holds for $V'$. The main difference
  being that the rows in $V'$ are considerably more correlated. Note
  that by assembling several $d \times d$ blocks to obtain an $n
  \times d$ matrix this property is retained, since each block is
  drawn independently.
\end{proof}

\subsection{Changing the Spectrum}
\label{sec:spectrum}

Changing the kernel from a Gaussian RBF to any other radial basis
function kernel is straightforward. After all, $HG \Pi HB$ provides
a approximately spherically uniformly distributed random vectors of
the same length. Rescaling each direction of projection separately costs only
$O(n)$ space and computation. Consequently we are free to choose
different coefficients $S_{ii}$ rather than \eq{eq:sii}. Instead, we
may use
\begin{align*}
  S_{ii} \sim c^{-1} r^{d-1} A_{d-1}^{-1} \lambda(r).
\end{align*}
Here $c$ is a normalization constant and 
$\lambda(r)$ is the radial part of the spectral density function
of the regularization operator associated with the kernel.

A key advantage over a conventional kernel approach is that we are not
constrained by the requirement that the spectral distributions be
analytically computable. Even better, we only need to be able to
\emph{sample} from the distribution rather than compute its Fourier
integral in closed form.

For concreteness consider the Matern kernel. Its spectral properties
are discussed, e.g.\ by \cite{SchSmo02}. In a nutshell, given data in
$\RR^d$ denote by $\nu := \frac{d}{2}$ a dimension
calibration and let $t \in \NN$ be a fixed parameter determining the
degree of the Matern kernel (which is usually determined experimentally). 
Moreover, denote by $J_\nu(r)$ the Bessel function of the
first kind of order $\nu$. Then the kernel given by
\begin{align}
  \label{eq:maternpower}
  k(x,x') := \nbr{x-x'}^{-t \nu} J^t_\nu(\nbr{x-x'})
  \text{ for } {n \in \NN}
\end{align}
has as its associated Fourier transform
\begin{align*}
  \Fcal{k}(\omega) = {\bigotimes_{i=1}^n \chi_{S_d}}[\omega].
\end{align*}
Here $\chi_{S_d}$ is the characteristic function on the unit
ball in $\RR^d$ and $\bigotimes$ denotes convolution. 
In words, the Fourier transform of $k$ is the $n$-fold
convolution of $\chi_{S_d}$. Since convolutions of distributions arise
from adding independent random variables this yields
a simple algorithm for computing the Matern kernel:

\begin{algorithmic}
  \FOR{{\bfseries each} $S_{ii}$} 
  \STATE Draw $t$ iid samples $\xi_i$ uniformly from $S_d$.
  \STATE Use $S_{ii} = \nbr{\sum_{i=1}^t \xi_i}$ as scale.
  \ENDFOR
\end{algorithmic}
While this may appear costly, it only needs to be carried out once at
initialization time and it allows us to sidestep computing the
convolution entirely. After that we can store the coefficients
$S_{ii}$. Also note that this addresses a rather surprising problem
with the Gaussian RBF kernel --- in high dimensional spaces draws from
a Gaussian are strongly concentrated on the surface of a sphere. That
is, we only probe the data with a fixed characteristic
length. The Matern kernel, on the other hand, spreads its capacity
over a much larger range of frequencies.

\subsection{Inner Product Kernels}

We now put Theorem~\ref{th:polyexpand} and
Corollary~\ref{cor:sample-poly} to good use. Recall that the latter
states that any dot-product kernel can be obtained by taking
expectations over draws from the degree of corresponding Legendre
polynomial and over a random direction of reference, as established by
the integral representation of \eq{eq:integral}. 

It is understood that the challenging part is to draw vectors
uniformly from the unit sphere. Note, though, that it is this very
operation that Fastfood addresses by generating pseudo-Gaussian
vectors. Hence the modified algorithm works as follows:

\begin{algorithmic}
  \STATE {\bfseries Initialization}
  \FOR{$j=1$ {\bfseries to } $n/d$}
  \STATE Generate matrix block $V_j \leftarrow \nbr{G}^{-1}_\mathrm{Frob}
  d^{-\frac{1}{2}} H G \Pi H B$ implicitly as per \eq{eq:fastfood}.
  \STATE Draw degrees $n_i$ from $p(n) \propto \lambda_n N(d,n)$.
  \ENDFOR \\[2mm]
  \STATE {\bfseries Computation}
  \STATE $r \leftarrow \nbr{x}$ and $t \leftarrow V x$
  \FOR{$i=1$ {\bfseries to} $n$}
  \STATE $\psi_i \leftarrow L_{n_i,d}(t_i) = r^{n_i} L_{n_i,d}(t_i/r)$
  \ENDFOR
\end{algorithmic}
Note that the equality $L_{n_i,d}(t_i) = r^{n_i} L_{n_i,d}(t_i/r)$
follows from the fact that $L_{n_i,d}$ is a homogeneous polynomial of
degree $n_i$. The second representation may sometimes be more
effective for reasons of numerical stability. As can be seen, this
relies on access to efficient Legendre polynomial computation. Recent
work of \cite{BogMicFos12} shows that (quite surprisingly) this is
possible in $O(1)$ time for $L_n(t)$ regardless of the degree of the
polynomial. Extending these guarantees to associated Legendre
polynomials is unfortunately rather nontrivial. Hence, a direct
expansion in terms of $\inner{x}{v}^d$, as discussed previously, or
brute force computation may well be more effective. 

\begin{remark}[Kernels on the Symmetric Group]
We conclude our reasoning by providing an extension of the above
argument to the symmetric group. Clearly, by treating permutation
matrices $\Pi \in C_n$ as $d \times d$ dimensional vectors, we can use
them as inputs to a dot-product kernel. Subsequently, taking inner
products with random reference vectors of unit length yields kernels
which are dependent on the matching between permutations only. 
\end{remark}

% sufficient conditions - unit length

\section{Analysis}

The next step is to show that the feature map is well behaved also in
terms of decorrelation between rows of $V$. We focus on Gaussian RBF
kernels in this context.

\subsection{Low Variance}

When using random kitchen sinks, the variance of the feature map is at
least $O(1/n)$ since we draw $n$ samples iid from the space of
parameters. In the following we show that the variance of fastfood is
comparable, i.e.\ it is also $O(1/n)$, albeit with a dependence on the
magnitude of the magnitude of the inputs of the feature map. This
guarantee matches empirical evidence that both algorithms perform
equally well as the exact kernel expansion. 

For convenience, since the kernel values are real numbers, let us
simplify terms and rewrite the inner product in terms of a
sum of cosines. Trigonometric reformulation yields 
\begin{align}
  \frac{1}{n} \sum_j \bar\phi_j(x) \phi_j(x') = \frac{1}{n} \sum_j
  \cos {[V(x-x')]_j}
  \text{ for }
  V = d^{-\frac{1}{2}} HG\Pi H B.
\end{align}
We begin the analysis with a general variance bound for square
matrices $V \in \RR^{d \times d}$. The extension to $n/d$ iid drawn
stacked matrices is deferred to a subsequent corollary.

\begin{theorem}
  \label{thm:variance}
  Let $v=\frac{x-x'}{\sigma}$ and let $\psi_j(v)=\cos [Vv]_j$ denote the
  estimate of the kernel value arising from the $j$th pair of random
  features for each $j \in \cbr{1 \ldots d}$.  Then for each $j$ we have
  \begin{align}
    \label{eq:var-claim}
    \var\sbr{\psi_j(v)} = \frac{1}{2} \rbr{1-e^{-\nbr{v}^2}}^2
    \text{ and }
    \var\sbr{\sum_{j=1}^d\psi_j(v)} \leq 
    \frac{d}{2} \rbr{1-e^{-\nbr{v}^2}}^2 + d C(\nbr{v})
  \end{align}
  where $C(\alpha) = 6 \alpha^4 \sbr{e^{-\alpha^2} +
    \frac{\alpha^2}{3}}$ depends on the scale of the argument of the kernel.
\end{theorem}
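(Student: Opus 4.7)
The plan is to dispose of the single-index variance via the marginal Gaussianity of each row of $V$, and then reduce the sum bound to a second-moment estimate for the pairwise conditional covariance $\rho_{ij}$. Since the Basic Properties subsection establishes that any row of $V = d^{-1/2}HG\Pi HB$ is marginally $\Ncal(0, I_d)$, the random variable $X := [Vv]_j$ is $\Ncal(0, \nbr{v}^2)$. The Gaussian characteristic function then gives $\Eb[\cos X] = e^{-\nbr{v}^2/2}$ and $\Eb[\cos^2 X] = \tfrac{1}{2}(1 + e^{-2\nbr{v}^2})$, whence $\var[\psi_j(v)] = \tfrac{1}{2}(1 - e^{-\nbr{v}^2})^2$.

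For the second claim, decompose $\var\sbr{\sum_j \psi_j(v)} = \sum_j \var[\psi_j(v)] + \sum_{i \neq j}\cov(\psi_i(v), \psi_j(v))$; the diagonal piece contributes $\tfrac{d}{2}(1-e^{-\nbr{v}^2})^2$. To handle the off-diagonal terms I would condition on $B$ and $\Pi$: $V$ is then linear in the Gaussian diagonal $G$, so, setting $u := HBv$ (with $\nbr{u}^2 = d\nbr{v}^2$), the pair $([Vv]_i, [Vv]_j)$ is jointly Gaussian with mean zero, marginal variance $\nbr{v}^2$, and conditional covariance $\rho_{ij} := d^{-1}\sum_k H_{ik}H_{jk}\, u_{\pi(k)}^2$. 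Combining $2\cos a \cos b = \cos(a-b) + \cos(a+b)$ with the Gaussian characteristic function gives $\Eb\sbr{\cos[Vv]_i \cos[Vv]_j \mid B, \Pi} = e^{-\nbr{v}^2}\cosh(\rho_{ij})$, and subtracting the product of marginal means $\Eb[\psi_i]\Eb[\psi_j] = e^{-\nbr{v}^2}$ yields $\cov(\psi_i, \psi_j) = e^{-\nbr{v}^2}\, \Eb[\cosh(\rho_{ij}) - 1]$.

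Controlling $\Eb[\cosh(\rho_{ij}) - 1]$ uses two bounds on $\rho_{ij}$. The triangle inequality gives the deterministic $|\rho_{ij}| \leq d^{-1}\sum_k u_{\pi(k)}^2 = \nbr{v}^2$. For the second moment, averaging $\rho_{ij}^2$ over a uniform $\Pi$ (splitting $k = k'$ from $k \neq k'$, using $\sum_l u_l^2 = d\nbr{v}^2$ together with the orthogonality $\sum_k H_{ik}H_{jk} = 0$) produces $\Eb_\Pi[\rho_{ij}^2] = (\alpha - \nbr{v}^4)/(d-1)$ with $\alpha := d^{-1}\sum_l u_l^4$; then the Rademacher fourth-moment identity $\Eb_B\sbr{(\sum_i B_{ii}\xi_i)^4} = 3(\sum_i \xi_i^2)^2 - 2\sum_i \xi_i^4$ applied with $\xi_i = H_{li}v_i$ (so $\xi_i^2 = v_i^2$) gives $\Eb_B[\alpha] \leq 3\nbr{v}^4$, hence $\Eb[\rho_{ij}^2] \leq 2\nbr{v}^4/(d-1)$. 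Expanding $\cosh(\rho) - 1 = \rho^2/2 + \sum_{k \geq 2} \rho^{2k}/(2k)!$, bounding the tail for $k \geq 2$ via the deterministic $|\rho_{ij}| \leq \nbr{v}^2$, summing over the $d(d-1)$ ordered pairs, and regrouping against the $e^{-\nbr{v}^2}$ prefactor produces the two summands of $dC(\nbr{v})$. The main obstacle is exactly this last tail control: the quadratic term alone would give only the $6\alpha^4 e^{-\alpha^2}$ piece, and it is the need to carry the remaining $\cosh$ terms — for which one only has the deterministic envelope $|\rho_{ij}| \leq \nbr{v}^2$ rather than an exponential concentration bound — that is responsible for the $2\alpha^6$ ingredient of $C(\alpha)$.
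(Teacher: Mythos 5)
Your proposal is correct and follows essentially the same route as the paper's proof: condition on $B,\Pi$, use joint Gaussianity in $G$ together with the product-to-sum identity to obtain the conditional covariance $e^{-\nbr{v}^2}\sbr{\cosh(\rho_{ij})-1}$, bound $\Eb[\rho_{ij}^2]$ through the random permutation and the Rademacher fourth moment of $HBv$, and control the higher-order $\cosh$ terms via the deterministic bound $|\rho_{ij}|\le\nbr{v}^2$. Your exact permutation average and exact Rademacher fourth-moment identity are marginally sharper than the paper's random-subset and componentwise bounds (and your Taylor-tail treatment replaces the paper's Lagrange-remainder step), but these only tighten constants within the same argument.
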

\begin{proof}
  Since for any random variable $X_j$ we can decompose 
  $\var \rbr{\sum X_j}  = \sum_{j,t}\cov(X_j,X_t)$ 
  our goal is to compute
  \begin{align*}
    \cov(\psi(v),\psi(v)) =
    \Eb\sbr{\psi(v)\psi(v)^\top}-\Eb\sbr{\psi(v)}
    \Eb\sbr{\psi(v)}^\top.
  \end{align*}
  We decompose $V v$ into a sequence of terms $w= d^{-\frac{1}{2}}
  HBv$ and $u=\Pi w$ and $z=HGu$. Hence we have
  $\psi_j(v)=\cos(z_j)$. Note that $\nbr{u} = \nbr{v}$ since by
  construction $d^{-\frac{1}{2}} H$, $B$ and $\Pi$ are orthonormal
  matrices.

  \paragraph{Gaussian Integral}

  Now condition on the value of $u$. Then it follows that 
  $\cov(z_j,z_t | u) = \rho_{jt}(u) \nbr{v}^2 = \rho(u) \nbr{v}^2$ 
  where $\rho \in [-1,1]$ is the correlation of $z_j$ and $z_t$. By
  symmetry all $\rho_{ij}$ are identical. 

  Observe that the marginal distribution of each $z_j$ is $\Ncal(0,
  \nbr{v}^2)$ since each element of $H$ is $\pm 1$.
  Thus the joint distribution of $z_j$ and $z_t$ is a Gaussian with
  mean $0$ and covariance
  \begin{align*}
    \cov\sbr{[z_j, z_t]|u} = 
    \mymatrix{rr}{1 & \rho \\ \rho & 1} \nbr{v}^2 = L \cdot L^T
    \text{ where }
    L = \mymatrix{cc}{1 & 0 \\ \rho & \sqrt{1-\rho^2}} \nbr{v}
  \end{align*}
  is its Cholesky factor. Hence
  \begin{align}
    \label{eq:cov-expanded}
    \cov(\psi_j(v),\psi_t(v) | u) = \Eb_g \sbr{\cos([Lg]_1)\cos([Lg]_2)}  
    - \Eb_g[\cos(z_j)]\Eb_g[\cos(z_t)]
  \end{align}
  where $g \in \RR^2$ is drawn from $\Ncal(0, \one)$. From the trigonometric identity
  \begin{equation*}
    \cos(\alpha)\cos(\beta) = \frac{1}{2} \sbr{\cos(\alpha-\beta) + \cos(\alpha+\beta)}
  \end{equation*} 
  it follows that we can rewrite
  \begin{align*}
    \Eb_g\sbr{\cos([Lg]_1)\cos([Lg]_2)} 
    = \frac{1}{2} \Eb_h\sbr{\cos(a_- h) + \cos(a_+ h)} 
    = \frac{1}{2} \sbr{e^{-\frac{1}{2} a_-^2} + e^{-\frac{1}{2} a_+^2}}
  \end{align*}
  where $h \sim \Ncal(0, 1)$ and $a^2_\pm = \nbr{L^\top [1, \pm 1]}^2
  = 2 \nbr{v}^2 (1\pm\rho)$. That is, after applying the addition
  theorem we explicitly computed the now one-dimensional Gaussian
  integrals. 

  We compute the first moment analogously. Since by construction $z_j$
  and $z_j$ have zero mean and variance $\nbr{v}^2$ we have that 
  \begin{align}
    \Eb_g[\cos(z_j)] \Eb_g[\cos(z_t)] = 
    \Eb_h[\cos( \nbr{v} h)]^2 = 
    e^{-\nbr{v}^2}
    \nonumber
  \end{align}
  Combining both terms we obtain that the covariance can be written as
  \begin{align}
    \label{eq:covtemplate}
    \cov[\psi_j(v), \psi_t(v)|u] = e^{-\nbr{v}^2} \sbr{\cosh[\nbr{v}^2\rho] - 1}
  \end{align}

  \paragraph{Taylor Expansion}

  To prove the first claim note that here $j = t$, since we are
  computing the variance of a single feature. Correspondingly $\rho(u)
  = 1$. Plugging this into \eq{eq:covtemplate} and simplifying terms
  yields the first claim of \eq{eq:var-claim}.

  To prove our second claim, observe that from the Taylor series of
  $\cosh$ with remainder in Lagrange form, it follows that there
  exists $\eta \in [-\nbr{v}^2|\rho|,\nbr{v}^2|\rho|]$ such
  \begin{align*}
    \cosh(\nbr{v}^2\rho) - 1 &= \frac{1}{2} \nbr{v}^4\rho^2 +
    \frac{1}{6} \sinh(\eta)\nbr{v}^6\rho^3 \\
    &\leq
    \frac{1}{2} \nbr{v}^4\rho^2 + \frac{1}{6} \sinh(\nbr{v}^2)\nbr{v}^6\rho^3 \\
    &\leq \rho^2 \nbr{v}^4 B(\nbr{v}),
  \end{align*}
  where $B(\nbr{v})= \frac 1 2 + \frac{\sinh(\nbr{v}^2)\nbr{v}^2} 6$.
  Plugging this into \eq{eq:covtemplate} yields
  $$\cov[\psi_j(v),\psi_t(v) | u] \le \rho^2 \nbr{v}^4 B(\nbr{v}).$$

  \paragraph{Bounding $\Eb_u[\rho^2]$}

  Note that the above is still conditioned on $u$.  What remains is to bound
  $\Eb_u[\rho^2]$, which is small if $\Eb[\nbr{u}_4^4]$ is small. The
  latter is ensured by $HB$, which acts as a randomized
  preconditioner:   
  Since $G$ is diagonal and $G_{ii} \sim \Ncal(0,1)$ independently we
  have
  \begin{align*}
    \cov[z,z] = \cov[HGu, HGu] = H\cov[Gu, Gu]H^\top 
    = H\Eb\sbr{\mathrm{diag}(u_1^2,\ldots,u_d^2)} H^\top.
  \end{align*}
  Recall that $H_{ij} = H_{ji}$ are elements of the Hadamard
  matrix. For ease of notation fix $j \ne t$ and let $T=\{i \in [1..d]
  : H_{ji}=H_{ti}\}$ be the set of columns where the $j^\mathrm{th}$
  and the $t^\mathrm{th}$ row of the Hadamard matrix agree. Then
  \begin{align*}
    \cov(z_j,z_t | u) = \sum_{i=1}^d H_{ji} H_{ti} u_i^2 = 
    \sum_{i \in T}u_i^2 - \sum_{i \notin T}u_i^2 
    = 2\sum_{i \in T}u_i^2 - \sum_{i=1}^du_i^2 = 2\sum_{i \in T}u_i^2 - \nbr{v}^2.
  \end{align*}
Now recall that $u=\Pi w$ and that $\Pi$ is a random permutation matrix. Therefore 
$u_i=w_{\pi(i)}$ for a randomly chosen permutation $\pi$ and thus the distribution of $\rho\nbr{v}^2$ 
and $2\sum_{i \in R}w_i^2 - \nbr{v}^2$ where $R$ is a randomly chosen
subset of size $\frac{d}{2}$ in 
$\cbr{1\ldots d}$ are the same. Let us fix (condition on) $w$. 
Since $2\Eb_{R}\sbr{\sum_{i \in R}w_i^2}=\nbr{v}^2$ we have that
\begin{equation}
  \nonumber
\Eb_{R}\sbr{\rho^2\nbr{v}^4}=4\Eb_{R}\sbr{\sbr{\sum_{i \in R}w_i^2}^2} -\nbr{v}^4.
\end{equation}
Now let $\delta_i=1$ if $i \in R$ and $0$ otherwise. Note that
$\Eb_{\delta}(\delta_i) = \frac 1 2$ and if $j \ne k $ then
$\Eb_{\delta}(\delta_i\delta_k) \le \frac 1 4$ as $\delta_i$ are
(mildly) negatively correlated.  From $\nbr{w}=\nbr{v}$ it follows
that
\begin{align}
  \nonumber
\Eb_R\sbr{\sbr{\sum_{i \in
      R}w_i^2}^2}=\Eb_{\delta}\sbr{\sbr{\sum_{i=1}^d\delta_iw_i^2}^2}
=   
\Eb_{\delta}\sbr{\sum_{i\ne k}\delta_i\delta_kw_i^2w_k^2}+\Eb_{\delta}{\sum_{i}\delta_iw_i^4} \le 
\frac {\nbr{v}^4} 4 + \frac {\nbr{w}_4^4} 2.
\end{align}
From the two equations above it follows that
\begin{equation}
\label{eq:4th-moment-3}
\Eb_R\sbr{\rho^2 \nbr{v}^4} \le 2 {\nbr{w}_4^4}.
\end{equation}

\paragraph{Bounding the fourth moment of $\nbr{w}$}

Let $b_i = B_{ii}$ be the independent $\pm 1$ random variables of
$B$. 
Using the fact $w_i=\frac 1 {\sqrt{d}}\sum_{t=1}^d H_{it} b_t v_t$ and that $b_i$ are independent
with similar calculations to the above it follows that
\begin{align*}
  \Eb_{b}\sbr{w_i^4} \le \frac{6}{d^2} \sbr{v_i^4 + \sum_{t \ne j} v_t^2v_j^2}
  \text{ and hence }
  \Eb_{b}\sbr{\nbr{w}_4^4} \le \frac{6}{d} \nbr{v}_2^4
\end{align*}
which shows that $\frac 1 {\sqrt d} HB$ acts as preconditioner
that densifies the input.
Putting it all together we have 
\begin{align*}
\sum_{j \ne t}\Eb_u\sbr{\cov(\psi_j(v),\psi_t(v) | u)} 
& \le 
 d^2 e^{-\nbr{v}^2}B(\nbr{v}) \Eb_R[\rho^2\nbr{v}^4]
\le 
 12 d e^{-\nbr{v}^2} B(\nbr{v})\nbr{v}^4 \\
& = 
 6 d \nbr{v}^4 \rbr{e^{-\nbr{v}^2} + {\nbr{v}^2}/3} 
\end{align*}
Combining the latter with the already proven first claim establishes the second claim.
\end{proof}

\begin{corollary}
  \label{thm:variance-n}
  Denote by $V, V'$ Gauss-like matrices of the form
  \begin{align}
    V = \sigma^{-1} d^{-\frac{1}{2}} HG\Pi HB
    \text{ and }
    V' = \sigma^{-1} d^{-\frac{1}{2}} S HG\Pi HB. 
  \end{align}
  Moreover, let 
  $C(\alpha) = 6 \alpha^4 \sbr{e^{-\alpha^2} + \frac{\alpha^2}{3}}$ be
  a scaling function. 
  Then for the feature maps obtained by stacking $n/d$ iid copies of
  either $V$ or $V'$ we have
  \begin{align}
    \label{eq:var-n}
    \var \sbr{\phi'(x)^\top \phi'(x')}
    \le \frac{2}{n} {\rbr{1-e^{-\nbr{v}^2}}^2} + \frac{1}{n} {C(\nbr{v})} 
    \text{ where } 
    v = \sigma^{-1} (x-x').
  \end{align}
\end{corollary}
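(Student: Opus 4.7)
The plan is to reduce the corollary to the single-block variance bound of Theorem~\ref{thm:variance} via independence of the stacked blocks. Let $v = \sigma^{-1}(x-x')$ and let $W$ denote the full matrix obtained by stacking $n/d$ independent copies $V_1, \ldots, V_{n/d}$. From the definition $\phi_j(x) = n^{-1/2}\exp(i[Wx]_j)$, we have
\begin{align*}
\phi'(x)^\top \phi'(x') = \frac{1}{n}\sum_{k=1}^{n/d}Y_k
\quad\text{where}\quad Y_k := \sum_{j=1}^d \cos\sbr{[V_k v]_j}.
\end{align*}
Since the blocks $V_k$ are iid by construction, the $Y_k$ are iid and variances add, so $\var\sbr{\phi'(x)^\top \phi'(x')} = \frac{1}{nd}\var[Y_1]$.

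For the $V$ case, Theorem~\ref{thm:variance} bounds $\var[Y_1] \le \frac{d}{2}\rbr{1-e^{-\nbr{v}^2}}^2 + d\,C(\nbr{v})$ directly, whence $\var \le \frac{1}{2n}\rbr{1-e^{-\nbr{v}^2}}^2 + \frac{1}{n}C(\nbr{v})$, which is in fact stronger than the claimed bound.

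For the $V'$ case I would retrace the proof of Theorem~\ref{thm:variance} while carrying an extra scaling factor $S_{jj}$ on each row. Writing $[V'v]_j = S_{jj}\, z_j$ with $z_j = [HG\Pi HBv]_j/(\sigma\sqrt{d})$, each marginal $S_{jj} z_j$ is distributed as $\Ncal(0, \nbr{v}^2)$ by Lemma~\ref{lem:key}, so the per-feature variance satisfies $\var[\psi'_j] = \frac{1}{2}\rbr{1-e^{-\nbr{v}^2}}^2$ exactly as in the $V$ case. For the off-diagonal covariances, conditioning on $(u, S)$ and repeating the Gaussian-integral computation (\ref{eq:cov-expanded}) yields the analogue of (\ref{eq:covtemplate}) in which $\rho$ is replaced by $S_{jj} S_{tt}\rho$ and the prefactor $e^{-\nbr{v}^2}$ becomes $e^{-\frac{1}{2}(S_{jj}^2+S_{tt}^2)\nbr{v}^2}$. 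Taylor-expanding $\cosh$ and then averaging over $S$ preserves the fourth-moment argument leading to (\ref{eq:4th-moment-3}) up to constants.

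The main obstacle is this last averaging step. The coordinates $S_{jj}$ are not independent because they share a common $\nbr{G}_\mathrm{Frob}^{-1/2}$ factor per (\ref{eq:sii}), and each $s_j$ is heavy-tailed rather than subgaussian, so one must control the joint moments $\Eb[S_{jj}^2 S_{tt}^2]$ and $\Eb[S_{jj}^4]$ with care rather than appealing to a naive independence argument. Fortunately, standard concentration of $\nbr{G}_\mathrm{Frob}^2$ around $d$, combined with the explicit density of $s_j$ in (\ref{eq:sii}), bounds these moments by absolute constants; this constant-factor slack is precisely what is absorbed by the factor $4$ on the first term of the stated bound relative to the tighter $V$ bound derived above.
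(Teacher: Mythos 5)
Your core reduction is exactly the paper's proof: write $\phi'(x)^\top\phi'(x')$ as the average of $n/d$ independent single-block estimates, use additivity of variance across blocks, and appeal to Theorem~\ref{thm:variance} with the extra $1/d$ from averaging within a block; like you, this yields $\frac{1}{2n}\rbr{1-e^{-\nbr{v}^2}}^2 + \frac{1}{n}C(\nbr{v})$, so the stated $\frac{2}{n}$ coefficient is simply slack, and the paper's own proof is no longer than this. For the $V'$ case the paper offers nothing beyond the assertion that the argument is ``near-identical,'' so you are not missing anything it actually supplies; however, two points in your added sketch are off. First, your allocation of the slack is misplaced: because $S$ in \eq{eq:sii} rescales each row of $HG\Pi HB$ (whose length is deterministic given $\nbr{G}_\mathrm{Frob}$, by \eq{eq:samelength}) with an independent chi-distributed radius, each row of $V'$ is \emph{exactly} Gaussian, so the diagonal (first) term is unchanged and $S$-free; any constant-factor inflation coming from averaging over $S$ would therefore have to be absorbed by the covariance term $\frac{1}{n}C(\nbr{v})$, which in the stated bound has \emph{no} slack --- the factor $4$ on the first term cannot ``precisely absorb'' it as you claim. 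Second, conditioning on $(u,S)$ and then ``repeating the Gaussian-integral computation'' is not innocuous: $S_{jj}$ contains $\nbr{G}_\mathrm{Frob}$ and is therefore correlated with the very Gaussian entries $G_{ii}$ whose conditional law that computation exploits, so the argument must be redone by conditioning on $(\Pi,B,\{s_i\})$ and splitting $G$ into its direction and Frobenius norm rather than treating $S$ as external. These issues are fixable with the moment bounds you allude to, but as written your closing of the $V'$ case is an unsubstantiated hand-wave --- admittedly, the same one the paper makes.
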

\begin{proof}
  Since $\phi'(x)^\top \phi'(x')$ is the average of $n/d$ independent
  estimates, each arising from $2d$ features. Hence we can appeal to
  Theorem~\ref{thm:variance} for a single block, i.e.\ when $n=d$. The
  near-identical argument for $V$ is omitted.
\end{proof}

\subsection{Concentration}

The following theorem shows that for a given error probability
$\delta$, the approximation error of a $d \times d$ block of Fastfood
is at most logarithmically larger than the error of
Random Kitchen Sinks. That is, it is only logarithmically
weaker. We believe that this bound is pessimistic and
could be further improved with considerable analytic effort. That
said, the $O(m^{-\frac{1}{2}})$
approximation guarantees to the kernel matrix are likely rather
conservative when it comes to generalization performance, as we found
in experiments. In other words, we found that the algorithm works much
better in practice than in theory, as confirmed in
Section~\ref{sec:exp}. Nonetheless it is important to establish tail
bounds, not to the least since this way improved guarantees for random
kitchen sinks also immediately benefit fastfood. 

%  (likewise
% we defer analyzing the concentration of $n > d$ stacked Fastfood
% features to future work).

\begin{theorem}
\label{thm:concentration}
For all $x, x'\in \RR^d$ let $\hat{k}(x,x')=\sum_{j=1}^d\cos(d^{-\frac 1 2} [HG\Pi HB(x-x')/\sigma]_j)/d$ 
denote our estimate of the RBF kernel $k(x,x')$ that arises from a $d \times d$ block of 
Fastfood. Then we have that
\begin{equation*}
\Pb\sbr{
   \abr{\hat{k}(x,x')-k(x,x')} \ge 2 \sigma^{-1} d^{-\frac{1}{2}}
   \nbr{x - x'} 
   \sqrt{\log (2/\delta) \log(2d/\delta)}} \le 2\delta
 \text{ for all } \delta > 0
\end{equation*}
\end{theorem}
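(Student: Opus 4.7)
I would set $v := (x-x')/\sigma$, $w := d^{-1/2} HBv$, and $u := \Pi w$. Since $d^{-1/2}H$, $B$, and $\Pi$ are orthonormal, $\nbr{u}_2 = \nbr{v}_2$, and the permutation $\Pi$ leaves the $\infty$-norm invariant, so $\nbr{u}_\infty = \nbr{w}_\infty$. With this notation $\hat k(x,x') = d^{-1}\sum_{j=1}^d \cos z_j$ where $z := HGu$. The overall plan is to condition on $(B,\Pi)$, so that $u$ is fixed and $z$ becomes a centered Gaussian vector with each marginal $\Ncal(0,\nbr{v}^2)$ (using $H_{ji}^2=1$), which implies $\Eb[\hat k\mid u] = e^{-\nbr{v}^2/2} = k(x,x')$; then apply Gaussian Lipschitz concentration in the remaining randomness $G$, and finally bound the randomness in $B$ separately and combine via a union bound.

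The heart of the argument is to show that the Lipschitz constant of $G\mapsto \hat k(G)$ is at most $\nbr{u}_\infty$ rather than the naive $\nbr{u}_2$. A direct computation gives $\partial_{G_{ii}}\hat k = -(u_i/d)(H\sin z)_i$, so using $H^\top H = d\, I$ and $|\sin|\le 1$,
\begin{align*}
\nbr{\nabla_G \hat k}_2^2
&= \frac{1}{d^2}\sum_i u_i^2\, (H\sin z)_i^2 \\
&\le \frac{\nbr{u}_\infty^2}{d^2}\,\nbr{H\sin z}_2^2 \\
&= \frac{\nbr{u}_\infty^2}{d}\,\nbr{\sin z}_2^2 \\
&\le \nbr{u}_\infty^2.
\end{align*}
Tsirelson--Ibragimov--Sudakov Gaussian Lipschitz concentration then yields, conditional on $u$, $\Pb\sbr{|\hat k-k(x,x')|\ge \nbr{u}_\infty\sqrt{2\log(2/\delta)}\mid u} \le \delta$.

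To finish, I would bound $\nbr{u}_\infty = \nbr{w}_\infty$ using the randomness of $B$: each $w_i = d^{-1/2}\sum_t H_{it}b_t v_t$ is a Rademacher sum with sub-Gaussian parameter $\nbr{v}^2/d$, so Hoeffding plus a union bound over the $d$ coordinates gives $\nbr{u}_\infty \le \nbr{v}\sqrt{2\log(2d/\delta)/d}$ with probability at least $1-\delta$ over $B$. Substituting yields the threshold $\nbr{u}_\infty\sqrt{2\log(2/\delta)} \le 2\sigma^{-1}d^{-1/2}\nbr{x-x'}\sqrt{\log(2/\delta)\log(2d/\delta)}$, and a union bound over the two bad events delivers the claimed total failure probability $\le 2\delta$. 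The main obstacle is the Lipschitz step: the tempting direct bound $\nbr{\nabla_G\hat k}_2\le \nbr{u}_2 = \nbr{v}$ only produces $|\hat k-k|=O(\nbr{v}\sqrt{\log(1/\delta)})$ and entirely loses the essential $d^{-1/2}$ averaging gain. Recovering it requires the Cauchy--Schwarz reorganization that exploits the flatness of the Hadamard matrix together with the randomized ``preconditioning'' by $B$, which forces $\nbr{u}_\infty$ to be a factor $\sqrt{\log d / d}$ smaller than $\nbr{u}_2$; this is precisely the source of the extra $\sqrt{\log(2d/\delta)}$ factor relative to an ideal independent-samples Hoeffding bound for Random Kitchen Sinks, consistent with the theorem's assertion that Fastfood is only logarithmically weaker.
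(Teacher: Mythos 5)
Your proposal is correct and follows essentially the same route as the paper's proof: condition on $B,\Pi$, show that $g \mapsto d^{-1}\sum_j \cos([H\,\mathrm{diag}(g)\,u]_j)$ is Lipschitz with constant $\nbr{u}_\infty$, bound $\nbr{u}_\infty$ using the randomness of $B$ (your Hoeffding-plus-union-bound step is exactly a direct proof of the Ailon--Chazelle densification lemma the paper invokes), apply Gaussian measure concentration, and combine the two $\delta$-events by a union bound. The only differences are cosmetic: you obtain the Lipschitz constant by bounding $\nbr{\nabla_G \hat k}_2$ rather than by the paper's chained norm inequalities, and you state explicitly that the conditional mean given $u$ equals $k(x,x')$, which the paper leaves implicit via Lemma~\ref{lem:key}.
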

Theorem~\ref{thm:concentration} demonstrates almost sub-Gaussian convergence 
Fastfood kernel for a fixed pair of points $x, x'$. A standard $\epsilon$-net argument
then shows uniform convergence over any compact set of $\RR^d$ with bounded 
diameter~\cite[Claim 1]{RahRec08}. Also, the small error of the approximate kernel 
does not significantly perturb the solution returned by wide range of learning 
algorithms~\cite[Appendix B]{RahRec08} or affect their generalization error.

Our key tool is concentration of Lipschitz continuous functions under
the Gaussian measure~\cite{Ledoux96}. We ensure that Fastfood
construct has a small Lipschitz constant using
the following lemma, which is due to \cite{AilCha09}. 

\begin{lemma}[\citealp*{AilCha09}]
\label{lem:densification}
Let $x \in \RR^d$ and $t > 0$. Let $H, B \in\RR^{d\times d}$
denote the Hadamard and the binary random diagonal matrices
respectively in our construction. Then 
%% $\Pb\sbr{ \nbr{d^{-\frac 1 2}HBx}_{\infty} \ge t\nbr{x}_2} \le 2d e^{-t^2 d/2}$.
%% In particular, 
for any $\delta > 0$ we have that 
\begin{align}
  \Pb\sbr{ \nbr{HBx}_{\infty} 
  \ge \nbr{x}_2 \sqrt{2 \log {2d}/\delta}}
  \le \delta
\end{align}
\end{lemma}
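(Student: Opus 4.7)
The plan is to reduce the statement to a Rademacher tail bound applied row-by-row, and then take a union bound. Fix a row index $i \in \{1,\ldots,d\}$ and observe that the $i$th coordinate of $HBx$ has the explicit form
\begin{align*}
  [HBx]_i = \sum_{j=1}^d H_{ij} B_{jj} x_j.
\end{align*}
Since $H_{ij} \in \{\pm 1\}$ is deterministic and the diagonal entries $B_{jj}$ are i.i.d.\ uniform on $\{\pm 1\}$, the products $\epsilon_j := H_{ij} B_{jj}$ are again i.i.d.\ Rademacher random variables. Hence $[HBx]_i$ is a signed sum $\sum_j \epsilon_j x_j$ with fixed coefficients $x_j$.

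Next I would apply Hoeffding's inequality to this Rademacher sum, which yields
\begin{align*}
  \Pb\sbr{\abr{[HBx]_i} \ge t} \le 2\exp\!\rbr{-\frac{t^2}{2\nbr{x}_2^2}}
\end{align*}
for every $t > 0$. Choosing $t = \nbr{x}_2\sqrt{2\log(2d/\delta)}$ makes the right-hand side exactly $\delta/d$. Since the same bound holds for each of the $d$ rows (with a different Rademacher sequence per row, but all sharing the same $B$), a union bound over $i \in \{1,\ldots,d\}$ gives
\begin{align*}
  \Pb\sbr{\nbr{HBx}_\infty \ge \nbr{x}_2\sqrt{2\log(2d/\delta)}} \le d \cdot \frac{\delta}{d} = \delta,
\end{align*}
which is the claimed inequality.

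There is no real obstacle here: the proof is essentially a one-line observation (that $H_{ij}B_{jj}$ is Rademacher) combined with Hoeffding plus a union bound. The only mild subtlety is that the rows of $HBx$ share the same randomness in $B$, so they are not independent; but this is irrelevant for a union bound, which only needs marginal tail control. Nothing more refined (e.g.\ sub-Gaussian concentration of Lipschitz functions on Gauss space, as invoked elsewhere in the analysis) is required at this stage, since $B$ lives in a discrete $\pm 1$ space and Hoeffding already gives the correct sub-Gaussian constant.
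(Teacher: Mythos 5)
Your proof is correct. The paper does not actually prove this lemma --- it is stated as a known result and attributed to \citet{AilCha09} --- and your argument (each coordinate $[HBx]_i=\sum_j H_{ij}B_{jj}x_j$ is a Rademacher sum since $|H_{ij}|=1$, Hoeffding gives a $2\exp\rbr{-t^2/2\nbr{x}_2^2}$ tail, and a union bound over the $d$ coordinates, which needs no independence across rows, yields the stated $\delta$) is exactly the standard proof of that cited fact, with the constants working out to the bound as stated.
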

In other words, with high probability, the largest elements of
$d^{-\frac{1}{2}} H B x$ are with high probability no larger than what
one could expect if all terms were of even size as per the $\nbr{x}_2$
norm. 

To use concentration of the Gaussian measure we need Lipschitz
continuity. We refer to a function $f:\RR^d \rightarrow \RR$ as
Lipschitz continuous with constant $L$ 
if for all $x, y \in \RR^d$ it holds that $|f(x)-f(y)|\le L
\nbr{x-y}_2$. Then the following holds \cite[Inequality 2.9]{Ledoux96}:
\begin{theorem}
\label{thm:Lip-Gauss}
Assume that $f:\RR^d \rightarrow \RR$ is Lipschitz continuous with
constant $L$ and let $g \sim \Ncal(0, \one)$ be drawn from a
$d$-dimensional Normal distribution. Then we have
\begin{align}
  \Pb\sbr{\abr{f(g)-\Eb_g\sbr{f(g)}}\ge t} \le 2 e^{-\frac{t^2}{2L^2}}.
\end{align}
\end{theorem}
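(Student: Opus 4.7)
The plan is to obtain a sub-Gaussian bound on the moment generating function of $f(g)-\Eb_g[f(g)]$ via Herbst's argument applied to the Gaussian logarithmic Sobolev inequality, and then convert it into a tail bound by a Chernoff/Markov step. First I would reduce to the smooth case by mollifying $f$ with a narrow Gaussian; the mollified function $f_\epsilon$ is $C^\infty$ with the same Lipschitz constant, so $\nbr{\nabla f_\epsilon}_2 \le L$ pointwise, and the tail bound transfers to $f$ as $\epsilon\to 0$ by dominated convergence.

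The core ingredient is Gross's Gaussian log-Sobolev inequality: for every smooth $h:\RR^d\to\RR$,
\begin{align*}
  \Eb_g\sbr{h(g)^2\log h(g)^2} - \Eb_g\sbr{h(g)^2}\log \Eb_g\sbr{h(g)^2} \le 2\,\Eb_g\sbr{\nbr{\nabla h(g)}_2^2}.
\end{align*}
I would cite this as a classical result (see \cite{Ledoux96}); standard derivations either tensorize the two-point inequality on $\cbr{\pm 1}^n$ and pass to the Gaussian limit via the CLT, or invoke the Bakry--\'Emery curvature bound for the Ornstein--Uhlenbeck semigroup.

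Applying the log-Sobolev inequality to $h=\exp(\lambda f/2)$ for $\lambda>0$ and writing $H(\lambda):=\Eb_g[\exp(\lambda f(g))]$, the left-hand side becomes $\lambda H'(\lambda)-H(\lambda)\log H(\lambda)$, while the right-hand side is bounded by $(\lambda^2 L^2/2)\,H(\lambda)$ using $\nbr{\nabla f}_2\le L$. Dividing by $\lambda^2 H(\lambda)$ shows that $K(\lambda):=\lambda^{-1}\log H(\lambda)$ satisfies $K'(\lambda)\le L^2/2$, and since $K(0+)=\Eb_g[f(g)]$, integration yields $\log \Eb_g[\exp(\lambda(f-\Eb_g f))] \le \lambda^2 L^2/2$. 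A Markov bound then gives $\Pb[f(g)-\Eb_g f(g)\ge t]\le \exp(\lambda^2 L^2/2-\lambda t)$, and optimizing at $\lambda=t/L^2$ collapses this to $\exp(-t^2/(2L^2))$. Applying the same argument to $-f$ and taking a union bound produces the factor of $2$ in the claim.

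The main obstacle is supplying the Gaussian log-Sobolev inequality itself; once it is granted, Herbst's argument and the Chernoff step amount to a short calculation. An alternative route avoiding log-Sobolev entirely is the Maurey--Pisier Gaussian interpolation trick between independent copies $g\sin\theta+g'\cos\theta$, but this yields the weaker constant $\pi^2 L^2/8$ in place of $2L^2$; for the sharp constant stated in the theorem I would keep to the log-Sobolev route.
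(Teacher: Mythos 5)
Your derivation is correct and complete. Note, however, that the paper does not prove this statement at all: it is imported verbatim as Inequality 2.9 of \cite{Ledoux96}, so there is no internal argument to compare against. What you supply is the standard Herbst argument, and every step checks out: mollification preserves the Lipschitz constant and lets you use $\nbr{\nabla f}_2 \le L$ almost everywhere; plugging $h = e^{\lambda f/2}$ into Gross's inequality gives $\lambda H'(\lambda) - H(\lambda)\log H(\lambda) \le \tfrac{\lambda^2 L^2}{2} H(\lambda)$; the substitution $K(\lambda) = \lambda^{-1}\log H(\lambda)$ with $K(0^+) = \Eb_g[f(g)]$ integrates to the sub-Gaussian moment generating function bound $\log \Eb_g[e^{\lambda(f-\Eb_g f)}] \le \lambda^2 L^2/2$; and the Chernoff step at $\lambda = t/L^2$ plus the union bound over $\pm f$ yields exactly the claimed $2e^{-t^2/(2L^2)}$ with the sharp constant. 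The only honest gap is the one you acknowledge --- the log-Sobolev inequality itself is taken as given --- which is the same black box the paper hides inside its citation, so nothing is lost relative to the paper's treatment; if anything your version makes the dependence explicit and shows where the constant $2L^2$ comes from, which matters downstream in Theorem~\ref{thm:concentration} where this constant enters the Fastfood tail bound through the Lipschitz constant of \eq{eq:f-Lip-const}. One small quibble with your closing aside: the Maurey--Pisier interpolation gives the moment generating function bound $e^{\lambda^2 \pi^2 L^2/8}$, which translates to a tail exponent of $-2t^2/(\pi^2 L^2)$, i.e.\ the denominator $2L^2$ is replaced by $\pi^2 L^2/2$, not $\pi^2 L^2/8$; this does not affect your main argument since you correctly discard that route.
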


\begin{proof}{\bf[Theorem \ref{thm:concentration}]}
Since both $k$ and $\hat{k}$ are shift invariant we set $v=\sigma(x-x')$ and write
$k(v)=k(x,x')$ and $\hat{k}(v)=\hat{k}(x,x')$ to simplify the notation.
Set $u=\Pi d^{-\frac 1 2}HBv$, and $z=HGu$ and 
define 
$$f(G,\Pi,B)= d^{-1} \sum_{j=1}^d\cos(z_j).$$
Observe that Lemma~\ref{lem:key} implies
$\Eb_{G,\Pi,B}\sbr{f(G,\Pi,B)}=k(v)$.  Therefore it is sufficient to
prove that $f(G,\Pi,B)$ concentrates around its mean.  We will
accomplish this by showing that $f$ is Lipschitz continuous as a
function of $G$ for most $\Pi$ and $B$. For $a\in\RR^d$ let
\begin{equation}
\label{eq:h-fun-def}
  h(a)= d^{-1} \sum_{j=1}^d\cos(a_j).
\end{equation}
Using the fact that cosine is Lipschitz continuous with constant $1$ 
we observe that for any pair of vectors $a, b\in\RR^d$ it holds that
\begin{align}
  |h(a)-h(b)| \le  d^{-1} \sum_{j=1}^d|\cos(a_j)-\cos(b_j)| \leq
  d^{-1} \nbr{a-b}_1 \le d^{-\frac{1}{2}} \nbr{a-b}_2.
\label{eq:h-Lip} 
\end{align}
For any vector $g \in \RR^d$ let $\mbox{diag}(g)\in\RR^{d \times d}$
denote the diagonal matrix whose diagonal is $g$. Observe that for any
pair of vectors $g, g'\in\RR^{d}$ we have that
\begin{align}
\nonumber
  \nbr{H\mbox{diag}(g)u-H\mbox{diag}(g')u}_2  & \le
  \nbr{H}_2\nbr{\mbox{diag}(g-g')u}_2
  \label{eq:HGu-Lip}
\end{align}
Let $G=\mbox{Diag}(g)$ in the Fastfood construct and recall the
definition of function $h$ as in \eq{eq:h-fun-def}. 
Combining the above inequalities for any pair of vectors 
$g, g'\in\RR^d$ yields the following bound
\begin{equation}
\label{eq:h-HGu-Lip}
|h(H\mbox{Diag}(g)u))-h(H\mbox{Diag}(g')u)| \le \nbr{u}_{\infty}\nbr{g-g'}_2.
\end{equation}
From $u=\Pi d^{-\frac 1 2}HBv$ and $\nbr{\Pi w}_{\infty}=\nbr{w}_{\infty}$ combined with 
Lemma~\ref{lem:densification} it follows that
\begin{equation}
\label{eq:u-dense}
  \nbr{u}_{\infty} \le \nbr{v}_2\sqrt{\frac{2}{d} \log \frac {2d} {\delta}}
\end{equation}
holds with probability at least $1-\delta$, where the probability is
over the choice of $B$.\footnote{Note that in contrast to
  Theorem~\ref{thm:variance}, the permutation matrix $\Pi$ does not
  play a role in the proof of Theorem~\ref{thm:concentration}.}  Now condition
on~\eq{eq:u-dense}.  From inequality~\eq{eq:h-HGu-Lip} we have
that the function 
$$g \rightarrow h(H\mbox{diag}(g)u) = f(\mbox{diag}(g),\Pi,B)$$ 
is Lipschitz continuous with Lipschitz
constant
\begin{equation}
\label{eq:f-Lip-const}
L=\nbr{v}_2\sqrt{\frac{2}{d} \log \frac {2d}{\delta}}.
\end{equation}
Hence from Theorem~\ref{thm:Lip-Gauss} and from the independently chosen 
$G_{jj} \sim \Ncal(0,1)$ it follows that
\begin{equation}
\label{eq:f-conc}
   \Pb_G\sbr{
   \abr{f(G,\Pi,B)-k(v)} \ge \sqrt{2 L \log 2/\delta}} \le \delta.
\end{equation}
Combining inequalities~(\ref{eq:f-Lip-const}) and (\ref{eq:f-conc})
with the union bound concludes the proof.
\end{proof}

\section{Experiments}
\label{sec:exp}

In the following we assess the performance of Random Kitchen Sinks and
Fastfood. The results show that Fastfood performs as well as Random Kitchen
Sinks in terms of accuracy. Fastfood, however, is orders of magnitude
faster and exhibits a significantly lower memory footprint. For
simplicity, we focus on penalized least squares regression since in
this case we are able to compute exact solutions and are independent
of any other optimization algorithms.  We also benchmark Fastfood on
CIFAR-10~\citep{Krizhevsky09} and observe that it achieves
state-of-the-art accuracy. This advocates for the use of non-linear
expansions even when $d$ is large.

\subsection{Approximation quality}

We begin by investigating how well our features can approximate the
exact kernel computation as $n$ increases. For that purpose, we
uniformly sample 4000 vectors from $[0,1]^{10}$. We compare the exact
kernel values to Random Kitchen Sinks and Fastfood.

The results are shown in Figure~\ref{fig:convergence}. We used the
absolute difference between the exact kernel and the approximation to
quantify the error (the relative difference also exhibits similar
behavior and is thus not shown due to space constraints). The results
are presented as averages, averaging over 4000 samples. As can be
seen, as $n$ increases, both Random Kitchen Sinks and Fastfood
converge quickly to the exact kernel values. Their performance is
indistinguishable, as expected from the construction of the algorithm.

\begin{figure}[tb]
\centering
\includegraphics[width=0.8\columnwidth]{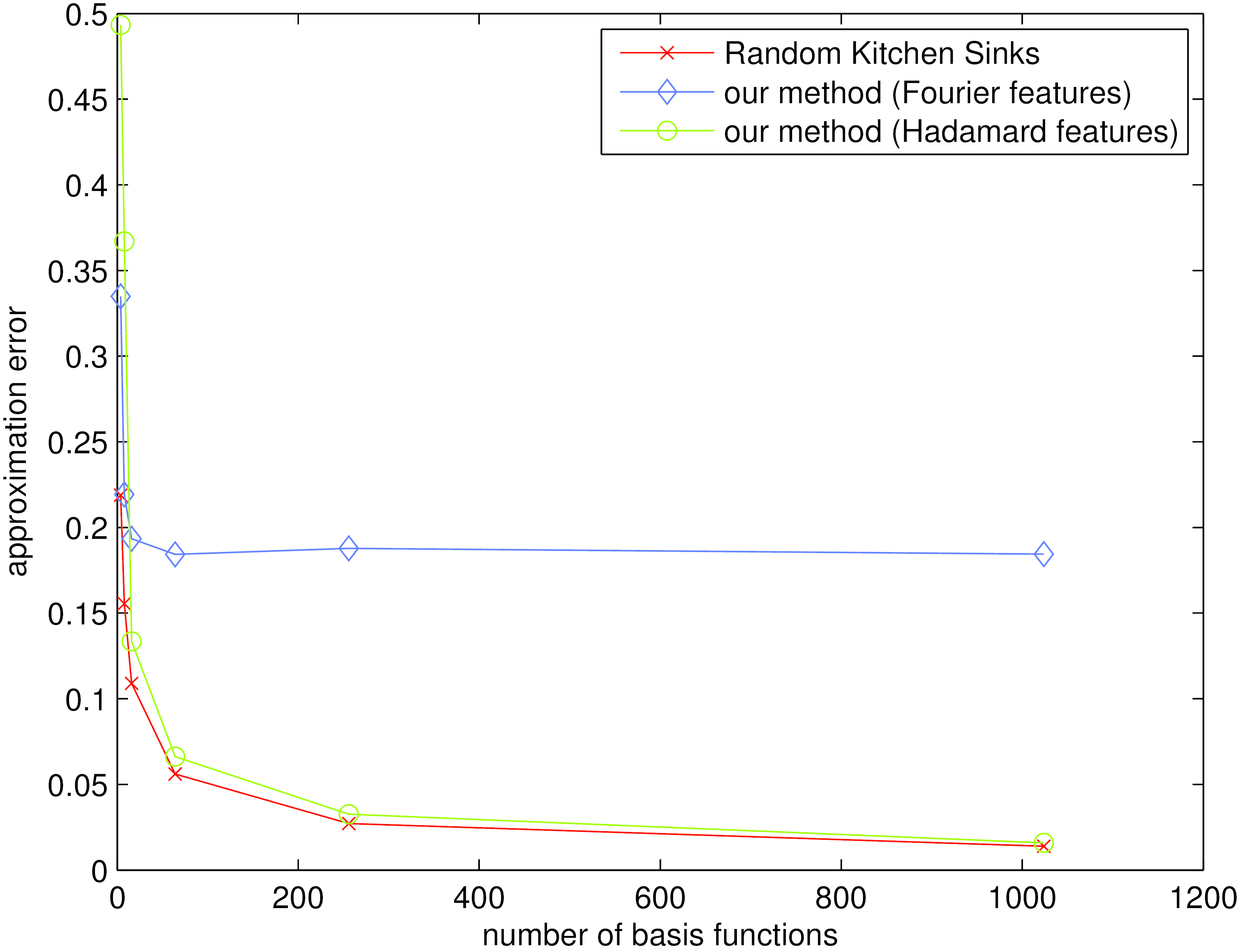}
\caption{Kernel approximation errors of different methods with respect
  to number of basis functions $n$.
\label{fig:convergence}}
\end{figure}

Note, though, that fidelity in approximating $k(x,x')$ does not 
imply generalization performance (unless the bounds are very
tight). To assess this, we carried out experiments on 
\emph{all regression datasets} from the UCI repository \citep{FraAsu10}
that are not too tiny, i.e.,\ that contained at least $4,000$ instances.

%This lower bound is reasonable since it makes no sense to approximate
%kernel functions when the inner-product matrix uses less than 64MB of
%memory.

We investigate estimation accuracy via Gaussian process
regression \citep{RasWil06} using approximated kernel computation methods and we
compare this to exact kernel computation whenever the latter is
feasible. For completeness, we compare the following methods:
\begin{description}
\item[Exact RBF] uses the exact Gaussian RBF kernel, that is $k(x,x')
  = \exp\rbr{-\nbr{x-x'}^2/2 \sigma^2}$. This is possible, albeit not
  practically desirable due to its excessive cost, on all but the
  largest datasets where the kernel matrix does not fit into memory.
\item[Nystrom] uses the Nystrom approximation of the kernel matrix 
  \citep{WilSee01}. These methods have received recent interest due to
  the improved approximation guarantees of \cite{Jinetal11} which
  indicate that approximation rates faster than $O(n^{-\frac{1}{2}})$
  are achievable. Hence, theoretically, the Nystrom method could have
  a significant accuracy advantage over Random Kitchen Sinks and
  Fastfood when using the same number of basis functions, albeit at 
  exponentially higher cost of $O(d)$ vs.\ $O(\log d)$ per
  function. We set $n = 2,048$ to retain a computationally feasible
  feature projection. 
\item[Random Kitchen Sinks] uses the the Gaussian random projection
  matrices of \cite{RahRec08}. As before, we use $n=2,048$ basis
  functions. Note that this is a rather difficult setting for Random
  Kitchen Sinks relative to the Nystrom decomposition, since the basis
  functions obtained in the latter are arguably better in terms of
  approximating the kernel. Hence, one would naively expect slightly
  inferior performance from Random Kitchen Sinks relative to direct
  Hilbert Space methods. 
\item[Fastfood] (\emph{Hadamard features}) uses the random
  matrix given by $SHG \Pi HB$, again with $n = 2,048$
  dimensions. Based on the above reasoning one would expect that the
  performance of the Hadamard features is even weaker than that of
  Random Kitchen Sinks since now the basis functions are no longer
  even independently drawn from each other. 
\item[FFT Fastfood] (\emph{Fourier features}) uses a variant of the
  above construction. Instead of combining two Hadamard matrices, a
  permutation and Gaussian scaling, we use a permutation in
  conjunction with a Fourier Transform matrix $F$: the random matrix
  given by $V=\Pi F B$. The motivation is the Subsampled Random
  Fourier Transform, as described by \cite{Tropp10}: by picking a
  random subset of columns from a (unitary) Fourier matrix, we end up
  with vectors that are almost spatially isotropic, albeit with
  slightly more dispersed lengths than in Fastfood. We use this
  heuristic for comparison purposes.
\item[Exact Poly] uses the exact polynomial kernel, that is $k(x,x') =
  (\inner{z}{x} + 1)^d$, with $d = 10$. Similar to the case of Exact
  RBF, this method is only practical on small datasets.
\item[Fastfood Poly] uses the Fastfood trick via Spherical Harmonics
  to approximate the polynomial kernels.
\end{description}
The results of the comparison are given in
Table~\ref{tab:accuracy}. As can be seen, and contrary to the
intuition above, there is virtually no difference between the exact
kernel, the Nystrom approximation, Random Kitchen Sinks and
Fastfood. In other words, Fastfood performs just as well as the exact
method, while being substantially cheaper to compute. Somewhat
surprisingly, the Fourier features work very well. This indicates that
the concentration of measure effects impacting Gaussian RBF kernels
may actually be counterproductive at their extreme. This is
corroborated by the good performance observed with the Matern kernel.

\begin{table}[tbh]
\caption{Speed and memory improvements of Fastfood relative
  to Random Kitchen Sinks
  \label{tab:improvements}}
\smallskip
\centering
  \begin{tabular}{rr|rrr|r} 
    $d$ & $n$                 &  Fastfood & RKS & Speedup &                RAM \\\hline
    $1,024$ & $16,384$        &  0.00058s & 0.0139s & 24x         & 256x  \\
    $4,096$ & $32,768$        &  0.00137s & 0.1222s & 89x         & 1024x \\
    $8,192$ & $65,536$        &  0.00269s & 0.5351 & 199x            & 2048x 
  \end{tabular}
\end{table}

\begin{sidewaystable}[tbh]
  \caption{Test set RMSE of different kernel computation methods. We
    can see Fastfood methods perform comparably with Exact RBF,
    Nystrom, Random Kitchen Sinks (RKS) and Exact Polynomial (degree
    10). $m$ and $d$ are the size of the training set the dimension of
    the input. Note that the problem size made it impossible to
    compute the exact solution for datasets of size 40,000 and up.}
  \label{tab:accuracy}
  \begin{center}
    \begin{tabular}{l|r|r|rrr|rrrr|r|r} 
Dataset               & $m$ & $d$ & Exact   & \hspace{-2mm} Nystrom &
RKS & Fastfood   &  \hspace{-2mm} Fastfood & Exact & \hspace{-2mm} Fastfood                                              & Exact & Fastfood\\ 
                      &  & & RBF              & RBF         & RBF&  FFT &  RBF & Matern & Matern                         & Poly  & Poly\\\hline
Insurance    & $5,822$ & $85$ & 0.231        & 0.232   & 0.266         & 0.266           & 0.264 & 0.234 & 0.235         & 0.256 & 0.271\\ \hline
Wine          & $4,080$ & $11$ & 0.819        & 0.797   & 0.740         & 0.721           & 0.740 &  0.753 & 0.720       & 0.827 & 0.731 \\
Quality       &         &      &              &         &               &                 &       &        &             &               \\ \hline
Parkinson            & $4,700$ & $21$ & 0.059        & 0.058   & 0.054         & 0.052           & 0.054 & 0.053 & 0.052 & 0.061 & 0.055      \\ \hline        
CPU                   & $6,554$ & $21$ & 7.271        & 6.758   & 7.103         & 4.544           & 7.366 & 4.345 & 4.211& 7.959 & 5.451\\ \hline
CT slices \hspace{-3mm} & $42,800$ & $384$ & n.a.  & 60.683 & 49.491 & 58.425 & 43.858 & n.a. & 14.868                   &  n.a. & 53.793\\
(axial)       &         &      &              &         &               &                 &       &        &             &         \\ \hline
KEGG \hspace{-3mm} & $51,686$ &  27 & n.a. & $17.872$ & $17.837$ & $17.826$ & $17.818$ &  n.a. & 17.846                  &  n.a. & 18.032\\ 
Network    &         &      &              &         &               &                 &       &        &                &         \\ \hline
Year  \hspace{-3mm}   & $463,715$ & $90$ & n.a.    &  0.113 & 0.123 & 0.106 & 0.115  & n.a. & 0.116                      &  n.a. & 0.114\\ 
Prediction       &         &      &              &         &               &                 &       &        &          &         \\ \hline
Forest                & $522,910$ & $54$ & n.a.    & 0.837   & 0.840         & 0.838           & 0.840  &  n.a. &  0.976 & n.a. & 0.894 
\end{tabular}
\end{center}
\end{sidewaystable}

In Figure~\ref{fig:expansion}, we show regression performance as a
function of the number of basis functions $n$ on the CPU dataset. As
is evident, it is necessary to have a large $n$ in order to learn
highly nonlinear functions. Interestingly, although the Fourier
features do not seem to approximate the Gaussian RBF kernel, they
perform well compared to other variants and improve as $n$
increases. This suggests that learning the kernel by direct spectral
adjustment might be a useful application of our proposed method.

\begin{figure}[tb]
\centering
\includegraphics[width=0.7\columnwidth]{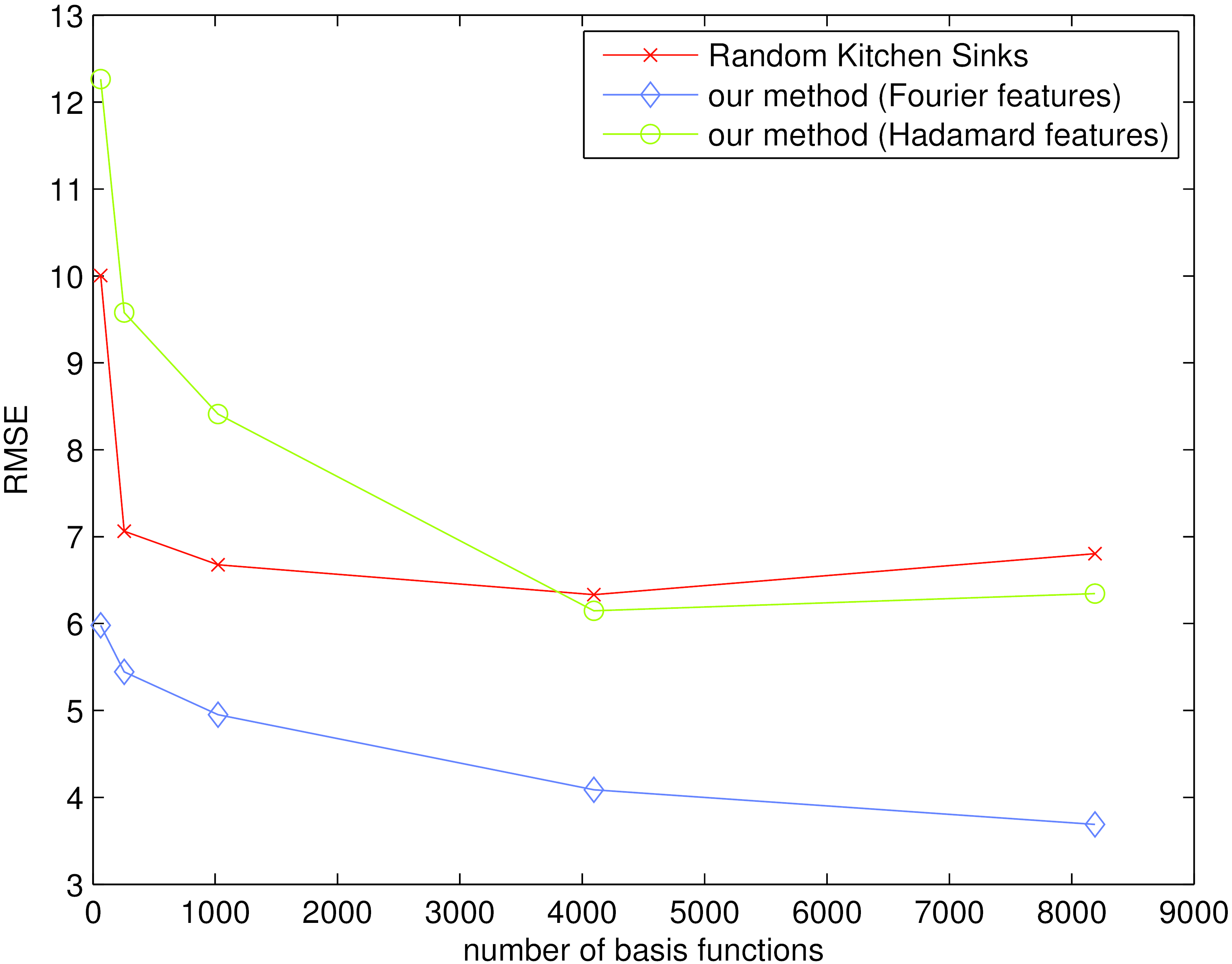}
\caption{Test RMSE on CPU dataset with respect to the number of basis
  functions. As number of basis functions increases, the quality of
  regression generally improves. 
%Note that although Fourier features
%  do not faithfully approximate the kernel RBF (see Figure 1), its
%  regression performance also improves as the number of basis function
%  increases.
}
\label{fig:expansion}
\end{figure}

\subsection{Speed of kernel computations}

In the previous experiments, we observe that Fastfood is on par with
exact kernel computation, the Nystrom method, and Random Kitchen
Sinks. The key point, however, is to establish whether the algorithm
offers computational savings. 

For this purpose we compare Random Kitchen Sinks using
Eigen\footnote{\url{http://eigen.tuxfamily.org/index.php?title=Main_Page}}
and our method using Spiral\footnote{\url{http://spiral.net}}.
Both are highly optimized numerical linear algebra libraries in
C++. We are interested in the time it takes to go from raw features of
a vector with dimension $d$ to the label prediction of that vector. On
a small problem with $d=1,024$ and $n=16,384$, performing prediction
with Random Kitchen Sinks takes 0.07 seconds. Our method is around 24x
faster, taking only 0.003 seconds to compute the label for one input
vector. The speed gain is even more significant for larger problems,
as is evident in Table~\ref{tab:improvements}.  This confirms
experimentally the $O(n \log d)$ vs.\ $O(nd)$ runtime and the $O(n)$ vs.\
$O(nd)$ storage of Fastfood relative to Random Kitchen Sinks. In other
words, the computational savings are substantial for large input
dimensionality $d$.

\subsection{Random features for CIFAR-10}

To understand the importance of nonlinear feature expansions for a
practical application, we benchmarked Fastfood, Random Kitchen Sinks on
the CIFAR-10 dataset~\cite{Krizhevsky09} which has 50,000 training images
and 10,000 test images. Each image has 32x32 pixels and 3 channels
($d = 3072$). In our experiments, linear SVMs achieve 42.3\% accuracy
on the test set. Non-linear expansions improve the classification
accuracy significantly. In particular, Fastfood FFT (``Fourier
features'') achieve 63.1\% while Fastfood (``Hadamard features'') and
Random Kitchen Sinks achieve 62.4\% with an expansion of $n =
16,384$. These are also best known classification accuracies using
permutation-invariant representations on this dataset. In terms of
speed, Random Kitchen Sinks is 5x slower (in total training time) and
20x slower (in predicting a label given an image) compared to both Fastfood
and and Fastfood FFT. This demonstrates that non-linear expansions are
needed even when the raw data is high-dimensional, and that Fastfood
is more practical for such problems. 

In particular, in many cases, linear function classes are used because
they provide fast training time, and especially test time, but not
because they offer better accuracy. The results on CIFAR-10
demonstrate that Fastfood can overcome this obstacle.

%\subsection{TODO}

%We need a few more experiments to make this a solid paper. Not much
%needed but a few more. 
%\begin{itemize}
%\item For Gaussian RBF kernel add comparison between local and
%  nonlocal basis functions.
%\item For dot product kernels compare $(\inner{x}{x'} + c)^d$ to a
%  corresponding sampling from inner product features space. I don't
%  think that using the exact associated polynomials is really
%  needed. That said, some libraries have them, e.g.\ Boost
%  \url{http://nebel.rc.fas.harvard.edu/mjuric/std/w12/external/boost/1.47.0/boost_1_47_0/libs/math/doc/sf_and_dist/html/math_toolkit/special/sf_poly/legendre.html}
%  and
%  \url{http://people.sc.fsu.edu/~jburkardt/cpp_src/legendre_polynomial/legendre_polynomial.html}. 
%\item Pick a dataset where we can beat the state of the art quite a
%  bit by using fastfood in terms of accuracy (i.e.\ by using an
%  overwhelming number of basis functions).
%\end{itemize}

\section{Summary}

We demonstrated that it is possible to compute $n$ nonlinear basis
functions in $O(n \log d)$ time, a significant speedup over the best
competitive algorithms. This means that kernel
methods become more practical for problems that have large datasets
and/or require real-time prediction. In fact, Fastfood can be used to
run on cellphones because not only it is fast, but it also requires
only a small amount of storage.

Note that our analysis is not limited to translation invariant kernels
but it also includes inner product formulations. This means that for
most practical kernels our tools offer an easy means of making kernel
methods scalable beyond simple subspace decomposition
strategies. Extending our work to other symmetry groups is subject to
future research. Also note that fast multiplications with
near-Gaussian matrices are a key building block of many randomized
algorithms. It remains to be seen whether one could use the proposed
methods as a substitute and reap significant computational savings.

%\bibliography{../../../bibfile/bibfile.bib}

\begin{thebibliography}{55}
\providecommand{\natexlab}[1]{#1}
\providecommand{\url}[1]{\texttt{#1}}
\expandafter\ifx\csname urlstyle\endcsname\relax
  \providecommand{\doi}[1]{doi: #1}\else
  \providecommand{\doi}{doi: \begingroup \urlstyle{rm}\Url}\fi

\bibitem[Ailon and Chazelle(2009)]{AilCha09}
N.~Ailon and B.~Chazelle.
\newblock The fast johnson-lindenstrauss transform and approximate nearest
  neighbors.
\newblock \emph{{SIAM} Journal on Computing}, 39\penalty0 (1):\penalty0
  302--322, 2009.

\bibitem[Aizerman et~al.(1964)Aizerman, Braverman, and Rozono\'er]{AizBraRoz64}
M.~A. Aizerman, A.~M. Braverman, and L.~I. Rozono\'er.
\newblock Theoretical foundations of the potential function method in pattern
  recognition learning.
\newblock \emph{Autom. Remote Control}, 25:\penalty0 821--837, 1964.

\bibitem[Aronszajn(1944)]{Aronszajn44}
N.~Aronszajn.
\newblock La th\'eorie g\'en\'erale des noyaux r\'eproduisants et ses
  applications.
\newblock \emph{Proc.\ Cambridge Philos.\ Soc.}, 39:\penalty0 133--153, 1944.

\bibitem[Berg et~al.(1984)Berg, Christensen, and Ressel]{BerChrRes84}
C.~Berg, J.~P.~R. Christensen, and P.~Ressel.
\newblock \emph{Harmonic Analysis on Semigroups}.
\newblock Springer, New York, 1984.

\bibitem[Bogaert et~al.(2012)Bogaert, Michiels, and Fostier]{BogMicFos12}
I.~Bogaert, B.~Michiels, and J.~Fostier.
\newblock {$O(1)$} computation of legendre polynomials and gauss--legendre
  nodes and weights for parallel computing.
\newblock \emph{SIAM Journal on Scientific Computing}, 34\penalty0
  (3):\penalty0 C83--C101, 2012.

\bibitem[Boser et~al.(1992)Boser, Guyon, and Vapnik]{BosGuyVap92}
B.~Boser, I.~Guyon, and V.~Vapnik.
\newblock A training algorithm for optimal margin classifiers.
\newblock In D.~Haussler, editor, \emph{Proc.\ Annual Conf.\ Computational
  Learning Theory}, pages 144--152, Pittsburgh, PA, July 1992. ACM Press.

\bibitem[Boyd et~al.(2010)Boyd, Parikh, Chu, Peleato, and
  Eckstein]{BoyParChuPelEtal10}
S.~Boyd, N.~Parikh, E.~Chu, B.~Peleato, and J.~Eckstein.
\newblock Distributed optimization and statistical learning via the alternating
  direction method of multipliers.
\newblock \emph{Foundations and Trends in Machine Learning}, 3\penalty0
  (1):\penalty0 1--123, 2010.

\bibitem[Burges(1996)]{Burges96}
C.~J.~C. Burges.
\newblock Simplified support vector decision rules.
\newblock In L.~Saitta, editor, \emph{Proc.\ Intl.\ Conf.\ Machine Learning},
  pages 71--77, San Mateo, CA, 1996. Morgan Kaufmann Publishers.

\bibitem[Cortes and Vapnik(1995)]{CorVap95}
C.~Cortes and V.~Vapnik.
\newblock Support vector networks.
\newblock \emph{Machine Learning}, 20\penalty0 (3):\penalty0 273--297, 1995.

\bibitem[Das and Kempe(2011)]{DasKem11}
A.~Das and D.~Kempe.
\newblock Submodular meets spectral: Greedy algorithms for subset selection,
  sparse approximation and dictionary selection.
\newblock In L.~Getoor and T.~Scheffer, editors, \emph{Proceedings of the 28th
  International Conference on Machine Learning, {ICML}}, pages 1057--1064.
  Omnipress, 2011.

\bibitem[Dasgupta et~al.(2011)Dasgupta, Kumar, and Sarl{\'o}s]{DasKumSar11}
A.~Dasgupta, R.~Kumar, and T.~Sarl{\'o}s.
\newblock Fast locality-sensitive hashing.
\newblock In \emph{Proceedings of the 17th ACM SIGKDD international conference
  on Knowledge discovery and data mining}, pages 1073--1081. ACM, 2011.

\bibitem[Davies and Ghahramani(2014)]{DavGha14}
A.~Davies and Z.~Ghahramani.
\newblock The random forest kernel and other kernels for big data from random
  partitions.
\newblock \emph{arXiv preprint arXiv:1402.4293}, 2014.

\bibitem[Fan et~al.(2008)Fan, Chang, Hsieh, Wang, and Lin]{FanChaHsiWanetal08}
R.-E. Fan, J.-W. Chang, C.-J. Hsieh, X.-R. Wang, and C.-J. Lin.
\newblock {LIBLINEAR}: A library for large linear classification.
\newblock \emph{Journal of Machine Learning Research}, 9:\penalty0 1871--1874,
  August 2008.

\bibitem[Fine and Scheinberg(2001)]{FinSch01}
S.~Fine and K.~Scheinberg.
\newblock Efficient {SVM} training using low-rank kernel representations.
\newblock \emph{Journal of Machine Learning Research}, 2:\penalty0 243--264,
  2001.

\bibitem[Frank and Asuncion(2010)]{FraAsu10}
A.~Frank and A.~Asuncion.
\newblock {UCI} machine learning repository, 2010.
\newblock URL \url{http://archive.ics.uci.edu/ml}.

\bibitem[Girosi(1998)]{Girosi98}
F.~Girosi.
\newblock An equivalence between sparse approximation and support vector
  machines.
\newblock \emph{Neural Computation}, 10\penalty0 (6):\penalty0 1455--1480,
  1998.

\bibitem[Girosi and Anzellotti(1993)]{GirAnz93}
F.~Girosi and G.~Anzellotti.
\newblock Rates of convergence for radial basis functions and neural networks.
\newblock In R.~J. Mammone, editor, \emph{Artificial Neural Networks for Speech
  and Vision}, pages 97--113, London, 1993. Chapman and Hall.

\bibitem[Girosi et~al.(1995)Girosi, Jones, and Poggio]{GirJonPog95}
F.~Girosi, M.~Jones, and T.~Poggio.
\newblock Regularization theory and neural networks architectures.
\newblock \emph{Neural Computation}, 7\penalty0 (2):\penalty0 219--269, 1995.

\bibitem[Gray and Moore(2003)]{GraMoo03b}
Alexander~G. Gray and Andrew~W. Moore.
\newblock Rapid evaluation of multiple density models.
\newblock In \emph{Proc.\ Intl.\ Conference on Artificial Intelligence and
  Statistics}, 2003.

\bibitem[Haussler(1999)]{Haussler99}
David Haussler.
\newblock Convolution kernels on discrete structures.
\newblock Technical Report UCS-CRL-99-10, UC Santa Cruz, 1999.

\bibitem[Hochstadt(1961)]{Hochstadt61}
H.~Hochstadt.
\newblock \emph{Special functions of mathematical physics}.
\newblock Dover, 1961.

\bibitem[Huang et~al.(2007)Huang, Guestrin, and Guibas]{HuaGueGui07}
T.~Huang, C.~Guestrin, and L.~Guibas.
\newblock Efficient inference for distributions on permutations.
\newblock In \emph{NIPS}, 2007.

\bibitem[Jin et~al.(2011)Jin, Yang, Mahdavi, Li, and Zhou]{Jinetal11}
R.~Jin, T.~Yang, M.~Mahdavi, Y.F. Li, and Z.H. Zhou.
\newblock Improved bound for the nystrom's method and its application to kernel
  classification, 2011.
\newblock URL \url{http://arxiv.org/abs/1111.2262}.

\bibitem[Kimeldorf and Wahba(1970)]{KimWah70}
G.~S. Kimeldorf and G.~Wahba.
\newblock A correspondence between {B}ayesian estimation on stochastic
  processes and smoothing by splines.
\newblock \emph{Annals of Mathematical Statistics}, 41:\penalty0 495--502,
  1970.

\bibitem[Kondor(2008)]{Kondor08}
R.~Kondor.
\newblock \emph{Group theoretical methods in machine learning}.
\newblock PhD thesis, Columbia University, 2008.
\newblock URL
  \url{http://people.cs.uchicago.edu/~risi/papers/KondorThesis.pdf}.

\bibitem[Kreyszig(1989)]{Kreyszig89}
E.~Kreyszig.
\newblock \emph{Introductory Functional Analysis with Applications}.
\newblock Wiley, 1989.

\bibitem[Krizhevsky(2009)]{Krizhevsky09}
A.~Krizhevsky.
\newblock Learning multiple layers of features from tiny images.
\newblock Technical report, University of Toronto, 2009.

\bibitem[Ledoux(1996)]{Ledoux96}
M.~Ledoux.
\newblock Isoperimetry and gaussian analysis.
\newblock In \emph{Lectures on probability theory and statistics}, pages
  165--294. Springer, 1996.

\bibitem[Lee and Gray(2009)]{LeeGra09}
Dongryeol Lee and Alexander~G. Gray.
\newblock Fast high-dimensional kernel summations using the monte carlo
  multipole method.
\newblock In \emph{Neural Information Processing Systems}. MIT Press, 2009.

\bibitem[MacKay(2003)]{MacKay03}
David J.~C. MacKay.
\newblock \emph{Information Theory, Inference, and Learning Algorithms}.
\newblock Cambridge University Press, 2003.

\bibitem[Matsushima et~al.(2012)Matsushima, Vishwanathan, and
  Smola]{MatVisSmo12}
S.~Matsushima, S.V.N. Vishwanathan, and A.J. Smola.
\newblock Linear support vector machines via dual cached loops.
\newblock In Q.~Yang, D.~Agarwal, and J.~Pei, editors, \emph{The 18th {ACM}
  {SIGKDD} International Conference on Knowledge Discovery and Data Mining,
  {KDD}}, pages 177--185. ACM, 2012.
\newblock URL \url{http://dl.acm.org/citation.cfm?id=2339530}.

\bibitem[Mercer(1909)]{Mercer09}
J.~Mercer.
\newblock Functions of positive and negative type and their connection with the
  theory of integral equations.
\newblock \emph{Philos. Trans. R. Soc. Lond. Ser. A Math. Phys. Eng. Sci.}, A
  209:\penalty0 415--446, 1909.

\bibitem[Micchelli(1986)]{Micchelli86b}
C.~A. Micchelli.
\newblock Interpolation of scattered data: distance matrices and conditionally
  positive definite functions.
\newblock \emph{Constructive Approximation}, 2:\penalty0 11--22, 1986.

\bibitem[Neal(1994)]{Neal94}
R.~Neal.
\newblock Priors for infinite networks.
\newblock Technical Report CRG-TR-94-1, Dept.~of Computer Science, University
  of Toronto, 1994.

\bibitem[Rahimi and Recht(2008)]{RahRec08}
A.~Rahimi and B.~Recht.
\newblock Random features for large-scale kernel machines.
\newblock In J.C. Platt, D.~Koller, Y.~Singer, and S.~Roweis, editors,
  \emph{Advances in Neural Information Processing Systems 20}. MIT Press,
  Cambridge, MA, 2008.

\bibitem[Rahimi and Recht(2009)]{RahRec09}
Ali Rahimi and Benjamin Recht.
\newblock Weighted sums of random kitchen sinks: Replacing minimization with
  randomization in learning.
\newblock In \emph{Neural Information Processing Systems}, 2009.

\bibitem[Rasmussen and Williams(2006)]{RasWil06}
C.~E. Rasmussen and C.~K.~I. Williams.
\newblock \emph{Gaussian Processes for Machine Learning}.
\newblock MIT Press, Cambridge, MA, 2006.

\bibitem[Ratliff et~al.(2007)Ratliff, Bagnell, and Zinkevich]{RatBagZin07}
N.~Ratliff, J.~Bagnell, and M.~Zinkevich.
\newblock (online) subgradient methods for structured prediction.
\newblock In \emph{Eleventh International Conference on Artificial Intelligence
  and Statistics (AIStats)}, March 2007.

\bibitem[{Sch\"olkopf} et~al.(1998){Sch\"olkopf}, Smola, and
  {M\"uller}]{SchSmoMul98}
B.~{Sch\"olkopf}, A.~J. Smola, and K.-R. {M\"uller}.
\newblock Nonlinear component analysis as a kernel eigenvalue problem.
\newblock \emph{Neural Comput.}, 10:\penalty0 1299--1319, 1998.

\bibitem[Sch{\"o}lkopf and Smola(2002)]{SchSmo02}
Bernhard Sch{\"o}lkopf and A.~J. Smola.
\newblock \emph{Learning with Kernels}.
\newblock {MIT} Press, Cambridge, MA, 2002.

\bibitem[Smola(1998)]{Smola98}
A.~J. Smola.
\newblock \emph{Learning with Kernels}.
\newblock PhD thesis, Technische Universit\"at Berlin, 1998.
\newblock GMD Research Series No.~25.

\bibitem[Smola and Sch\"olkopf(2000)]{SmoSch00}
A.~J. Smola and B.~Sch\"olkopf.
\newblock Sparse greedy matrix approximation for machine learning.
\newblock In \emph{Proceedings of the International Conference on Machine
  Learning}, pages 911--918, San Francisco, 2000. Morgan Kaufmann Publishers.

\bibitem[Smola et~al.(1998{\natexlab{a}})Smola, {Sch\"olkopf}, and
  {M\"uller}]{SmoSchMul98}
A.~J. Smola, B.~{Sch\"olkopf}, and K.-R. {M\"uller}.
\newblock General cost functions for support vector regression.
\newblock In T.~Downs, M.~Frean, and M.~Gallagher, editors, \emph{Proc.\ of the
  Ninth Australian Conf.\ on Neural Networks}, pages 79--83, Brisbane,
  Australia, 1998{\natexlab{a}}. University of Queensland.

\bibitem[Smola et~al.(1998{\natexlab{b}})Smola, Sch{\"o}lkopf, and
  M{\"u}ller]{SmoSchMul98b}
A.~J. Smola, B.~Sch{\"o}lkopf, and K.-R. M{\"u}ller.
\newblock The connection between regularization operators and support vector
  kernels.
\newblock \emph{Neural Networks}, 11\penalty0 (5):\penalty0 637--649,
  1998{\natexlab{b}}.

\bibitem[Smola et~al.(2001)Smola, {\'Ov\'ari}, and Williamson]{SmoOvaWil01}
A.~J. Smola, Z.~L. {\'Ov\'ari}, and R.~C. Williamson.
\newblock Regularization with dot-product kernels.
\newblock In T.~K. Leen, T.~G. Dietterich, and V.~Tresp, editors,
  \emph{Advances in Neural Information Processing Systems 13}, pages 308--314.
  {MIT} Press, 2001.

\bibitem[Steinwart and Christmann(2008)]{SteChr08}
Ingo Steinwart and Andreas Christmann.
\newblock \emph{Support Vector Machines}.
\newblock Information Science and Statistics. Springer, 2008.

\bibitem[Taskar et~al.(2004)Taskar, Guestrin, and Koller]{TasGueKol04}
B.~Taskar, C.~Guestrin, and D.~Koller.
\newblock Max-margin {M}arkov networks.
\newblock In S.~Thrun, L.~Saul, and B.~Sch\"{o}lkopf, editors, \emph{Advances
  in Neural Information Processing Systems 16}, pages 25--32, Cambridge, MA,
  2004. MIT Press.

\bibitem[Teo et~al.(2010)Teo, Vishwanthan, Smola, and Le]{TeoVisSmoLe10}
Choon~Hui Teo, S.~V.~N. Vishwanthan, A.~J. Smola, and Quoc~V. Le.
\newblock Bundle methods for regularized risk minimization.
\newblock \emph{Journal of Machine Learning Research}, 11:\penalty0 311--365,
  January 2010.

\bibitem[Tropp(2010)]{Tropp10}
J.~A. Tropp.
\newblock Improved analysis of the subsampled randomized hadamard transform.
\newblock \emph{CoRR}, abs/1011.1595, 2010.
\newblock URL \url{http://arxiv.org/abs/1011.1595}.

\bibitem[Tsuda et~al.(2002)Tsuda, Kin, and Asai]{TsuKinAsa02}
K.~Tsuda, T.~Kin, and K.~Asai.
\newblock Marginalized kernels for biological sequences.
\newblock \emph{Bioinformatics}, 18 (Suppl.~2):\penalty0 S268--S275, 2002.

\bibitem[Vapnik et~al.(1997)Vapnik, Golowich, and Smola]{VapGolSmo97}
V.~Vapnik, S.~Golowich, and A.~J. Smola.
\newblock Support vector method for function approximation, regression
  estimation, and signal processing.
\newblock In M.~C. Mozer, M.~I. Jordan, and T.~Petsche, editors, \emph{Advances
  in Neural Information Processing Systems 9}, pages 281--287, Cambridge, MA,
  1997. {MIT} Press.

\bibitem[Wahba(1990)]{Wahba90}
G.~Wahba.
\newblock \emph{Spline Models for Observational Data}, volume~59 of
  \emph{CBMS-NSF Regional Conference Series in Applied Mathematics}.
\newblock {SIAM}, Philadelphia, 1990.

\bibitem[Williams(1998)]{Williams98}
C.~K.~I. Williams.
\newblock Prediction with {G}aussian processes: From linear regression to
  linear prediction and beyond.
\newblock In M.~I. Jordan, editor, \emph{Learning and Inference in Graphical
  Models}, pages 599--621. Kluwer Academic, 1998.

\bibitem[Williams and Seeger(2001)]{WilSee01}
Christoper K.~I. Williams and Matthias Seeger.
\newblock Using the {N}ystrom method to speed up kernel machines.
\newblock In T.~K. Leen, T.~G. Dietterich, and V.~Tresp, editors,
  \emph{Advances in Neural Information Processing Systems 13}, pages 682--688,
  Cambridge, MA, 2001. {MIT} Press.

\bibitem[Williamson et~al.(2001)Williamson, Smola, and
  {Sch\"olkopf}]{WilSmoSch01}
R.~C. Williamson, A.~J. Smola, and B.~{Sch\"olkopf}.
\newblock Generalization bounds for regularization networks and support vector
  machines via entropy numbers of compact operators.
\newblock \emph{IEEE Trans. Inform. Theory}, 47\penalty0 (6):\penalty0
  2516--2532, 2001.

\end{thebibliography}

\end{document}